\newtheorem{thm}{Theorem}
\DeclareMathOperator*{\argmin}{arg\,min}
\DeclareMathOperator*{\argmax}{arg\,max}
\title{Batch Bayesian Optimization via Particle Gradient Flows}
\author{
  Enrico Crovini \\
  Imperial College London \\
   \And
   Simon L. Cotter \\
   University of Manchester \\
   \And
   Konstantinos Zygalakis \\
   University of Edinburgh \\
   \And
   Andrew B. Duncan \\
  Imperial College London \& Alan Turing Institute
}
\begin{document}
\maketitle

\begin{abstract}
 Bayesian Optimisation (BO) methods seek to find global optima of objective functions which are only available as a black-box or are expensive to evaluate.  Such methods  construct a surrogate model for the objective function, quantifying the uncertainty in that surrogate through Bayesian inference.  Objective evaluations  are sequentially determined by maximising an acquisition function at each step.   However, this ancilliary optimisation problem can be highly non-trivial to solve, due to the non-convexity of the acquisition function, particularly in the case of batch Bayesian optimisation, where multiple points are selected in every step.  In this work we reformulate batch BO as an optimisation problem over the space of probability measures.  We construct a new acquisition function based on multipoint expected improvement which is convex over the space of probability measures.  Practical schemes for solving this `inner' optimisation problem arise naturally as gradient flows of this objective function.  We demonstrate the efficacy of this new method on different benchmark functions and compare with state-of-the-art batch BO methods.
\end{abstract}

\keywords{Batch Bayesian Optimisation \and Gradient Flows}

\section{Introduction}
Since their inception~\cite{mockus1978application}, Bayesian Optimization (BO) methods  have come to the fore as a widely used class of global optimisation methods for black-box objective functions for which it is  computationally expensive to obtain (possibly noisy) point evaluations~\cite{jones1998efficient, gonzalez2016batch}.  In this context, \textit{black-box} implies that the functional form of the function are not known, including the existence of derivatives.  More specifically, BO seeks to solve the global optimisation problem
\begin{equation}
    \mathbf{x}^* \in \argmin_{ \mathbf{x}} f( \mathbf{x}),
\end{equation}
where $f: \mathbb{R}^d\to \mathbb{R}$ is a function for which only noisy evaluations of the form
\begin{equation}
y_i= f( \mathbf{x}_i) + \varepsilon_i,\quad \varepsilon_i \sim \mathcal{N}(0, \sigma^2),
\end{equation}
are available.  BO methods have been deployed in many areas of science and engineering, ranging from hyper-parameter and architecture tuning for Deep Neural Networks~\cite{victoria2020automatic, alvi2019asynchronous}, to Automatic Chemical Design~\cite{griffiths2020constrained} and A/B testing~\cite{letham2019constrained}.  The development of easy-to-use and efficient software libraries for general purpose Bayesian optimisation, e.g. see \cite{BOsoft, balandat2020botorch},
 have played a substantial role in the widespread use of the methodology.

In sequential BO, given $n$ previous objective evaluations $\mathcal{D}_n = \lbrace (\mathbf{x}_1,y_1), \ldots, (\mathbf{x}_n, y_n)\rbrace$, the next evaluation point $\mathbf{x}_{n+1}$ is selected based on a probabilistic model of the objective function, typically a Gaussian process $\mathcal{GP}(m, k)$ which is conditioned on $\mathcal{D}_n$.  The resulting posterior distribution informs the point selection process through the \emph{acquisition function} which characterises the expected utility of evaluating a particular point for the purposes of optimising the partially observed black-box function $f$.   By identifying candidate evaluation points through the maximization of the acquisition function, and then evaluating the target function at these locations, BO aims to converge to the global optimum efficiently using a limited number of function evaluations.  In practice, the BO algorithm will continue until some stopping condition is met.

Many acquisition functions can be motivated through Bayesian decision theory as being the expected loss associated with evaluating at a point $\mathbf{x}$, and one such acquisition function is  \emph{Expected Improvement} (EI)~\cite{jones1998efficient}. Let $f_n^* = \min_{\mathbf{x} \in \mathcal{D}_n} f(\mathbf{x})$, then EI is defined to be: 
\begin{equation}
    \alpha_{EI}(\mathbf{x}) = \mathbb{E}\left[\mbox{max}(0, f_n^* - f(\mathbf{x}))\, \Big|\, \mathcal{D}_n\right].
\end{equation}
Through its form, EI balances two competing strategies, exploitation, which seeks to encourage points closer to the suspected maximisers of the function, and exploration which seeks to select points with high-uncertainty.   A combination of both strategies is essential to ensure that any global maximiser is attained without getting stuck in local maxima. The exploitation–exploration trade-off is a classic consideration in BO, and the expected improvement
criterion automatically accounts for  both as a result of the Bayesian decision theoretic formulation.  EI has been widely exploited in literature and many sequential methods make use of it, e.g. the Efficient Global Optimization (EGO) algorithm~\cite{jones1998efficient} and Local Penalization \cite{gonzalez2016batch} amongst others, taking advantage of analytical expressions for the EI gradient (e.g. \cite{roustant2012dicekriging}) to reduce the computational cost of the underlying optimisation problem.  Other acquisition functions such as upper confidence bound \cite{auer2002finite} and  probability of improvement \cite{jones2001taxonomy} have also been considered.  See  \cite{garnett_bayesoptbook_2022} for a comprehensive survey. 

In situations where the evaluation of the objective function poses significant computational expense, the use of parallel evaluations offers a workaround to this potential bottleneck.   In this setting, at the $n^{th}$ iteration of a batch BO algorithm, we seek points $\mathbf{x}^1_{n+1}, \ldots, \mathbf{x}^q_{n+1}$ based on the evaluations of the objective function at each point within all the previous batches, i.e $\mathcal{D}_n = \lbrace(\mathbf{x}^j_{i}, y^j_{i}) \, :\, i=1,\ldots, n, j=1,\ldots, q\rbrace$.  Based on the decision-theoretic foundations of EI, a batch-sequential algorithm was derived in \cite{schonlau1997computer,ginsbourger2008multi, ginsbourger2009metamodeles,ginsbourger2010kriging}  making use of a multi-point expected improvement acquisition function, known as $q$-EI, where $q$ indicates the size of the batch.  Choosing batches of points based on this acquisition function enables efficient parallel exploration of the state-space which lends itself particularly well to situations where multiple CPU resources are available.

More specifically, let once again $f_n^* = \min_{\mathbf{x} \in \mathcal{D}_n} f(\mathbf{x})$, we define the multi-point acquisition function by 
\begin{equation}\label{qEI}
    \text{q-EI}(\mathbf{x}_1,\ldots,\mathbf{x}_q) = \mathbb{E} \left[ \max(0, f_n^* - \min_{i = 1,\ldots,q} f(\mathbf{x}_i)) | \mathcal{D}_n \right].
\end{equation}
Since q-EI involves the minimum over multiple dependent random variables, it represents a richer acquisition function, which reduces to EI when $q=1$. We note that it is invariant to permutations. As shown in \cite{ginsbourger2008multi}, q-EI will be higher at  sets of points that each have high EI, but which maintain distance from each other. Therefore, by maximising q-EI, we will obtain a batch of  diverse evaluation points which promote exploration as well as exploitation.  \\

An analytic formula for q-EI is available but its calculation becomes intractable as the dimensionality of the domain $d$ and size of the batch $q$ grow large~\cite{ginsbourger2009two,chevalier2013fast}. For $q \geq 3$ one often resorts to approximating the gradient of the $q$-EI acquisition function using a Monte-Carlo estimator, for example~\cite{wang2020parallel} where MOE-qEI, a batch BO scheme based on an unbiased estimator of the gradient of q-EI  is derived.  Other approaches to batch-BO have been considered, including qKG~\cite{wu2016parallel}, which makes use of the batch version of the knowledge gradient acquisition function \cite{frazier2008knowledge}, d-KG~\cite{wu2017bayesian}, which is the extension of the last method to the case when derivative information for the target function is available and parallel predictive entropy search~\cite{shah2015parallel}.

\subsection{Goal and Contributions of this work}
One aspect of batch-BO setting which has not be directly addressed is the difficulties faced when optimising the acquisition function.   As noted in \cite{wilson2018maximizing},  acquisition functions are routinely non-convex, high-dimensional, and intractable, and this situation is exasperated in the batch-setting, where the complex interactions of evaluation points through the acquisition function give rise to numerous local maxima.   Approaches to performing the "inner-loop"  optimisation step in BO (parallel and sequential) were studied in \cite{wilson2018maximizing}.  Through re-parametrisation they demonstrate how acquisition functions estimated via Monte Carlo integration are consistently amenable to gradient-based optimization.  Secondly, they identify a wide family of acquisition functions which satisfy the sub-modularity relationship, which if satisfied, permit greedy approaches to optimization. \\

In this work, we investigate an alternative {probabilistic} reformulation of batch Bayesian optimisation.  Under this different lens, the inner optimisation loop can be viewed as the optimisation of an \emph{acquisition functional}, defined over the space of probability measures.  In this setting, we construct an acquisition functional, based on $q$-EI, which is provably concave, and for which convergence to the minimum probability measure can be guaranteed through the introduction of a regularisation term which renders the acquisition function strictly concave.   From a batch-BO perspective, this regularisation increases the strength of a repulsing force which promotes diversity of the sample points.  
This probabilistic reformulation immediately exposes a number of natural algorithms for approximating the evaluation points.  Indeed, by taking different gradient flows of the acquisition functional, we recover different algorithms.   We compare and contrast these different approaches  on several synthetic problems and a parameter calibration problem. \\

\section{Probabilistic Reformulation of Batch BO.}
In this section we present the proposed probabilistic reformulation of Batch BO and introduce a natural acquisition functional. Let $\mathcal{D} \subset \mathbb{R}^{d}$ be a compact, simply-connected domain. Let $k:\mathcal{D}\times \mathcal{D}\rightarrow \mathbb{R}$ be a kernel function on $\mathcal{D} \subset \mathbb{R}^d$.  We shall assume that $k$ is continuous on $\mathcal{D}\times \mathcal{D}$, so that the Gaussian process $f \, | \, \mathcal{D}_n$ has almost surely continuous realisations on $\mathcal{D}$.  

In the probabilistic formulation of Batch BO, at the $n^{th}$ step of the scheme, we seek a probability measure $\nu^*$ which maximises some acquisition functional $\mathcal{F}$, i.e. we seek
\begin{equation}
    \nu^* \in \arg\max_{\nu \in \mathcal{P}(\mathcal{D}^q)} \mathcal{F}[\nu],
\end{equation}
where $\mathcal{P}(\mathcal{D}^q)$ is the set of probability measures on $\mathcal{D}^q$, i.e. the Cartesian product of $q$ copies of $\mathcal{D}$.  The functional $\mathcal{F}$ must be constructed to favour probability measures which have high probability in configurations which concentrate around global optimum of the unknown objective function $f$ based on the previously observed noisy evaluations.  As with a standard acquisition function, $\mathcal{F}$ must be chosen to balance between exploration and exploitation.   To this end it makes sense to base $\mathcal{F}$ on existing, well-established acquisition functions.  

As a naive initial attempt, we can define the multi-point expected improvement acquisition functional to be  $\mathcal{F}:\mathcal{P}(\mathcal{D}^q)\rightarrow \mathbb{R}$ to be
\begin{equation}\label{generalqEI}
    \mathcal{F}\left[\nu \right] = \mathbb{E}_{(\mathbf{x}_1,\ldots,\mathbf{x}_q) \sim \nu }\left[ \mathbb{E}_{f} \left[ \max\left(0, f_n^* - \min_{i = 1,\ldots,q} f(\mathbf{x}_i)\right) | \mathcal{D}_n \right]\right].
\end{equation}

The focus is therefore shifted on the joint distribution of the particles' location, not the query points themselves. By maximising this quantity, we would like to retrieve a distribution such that if particles are drawn from it, they achieve the  q-EI optimum.  We note that this is a generalisation of the formulation presented in \cite{gong2019quantile} when $q=1$.

While  \eqref{generalqEI} is a natural probabilistic generalisation of $q$-EI, this is of limited benefit as the resulting scheme will inherit most of the challenges arising when optimising standard $q$-EI. To overcome this, we shall restrict the space of admissible solutions to provide a concave relaxation of the optimisation problem.  To this end, we shall assume that $\nu$ takes the form of a product measure, i.e. $\nu = \mu \times \ldots \times \mu \in \mathcal{P}(\mathcal{D}^q)$, where $\mu \in \mathcal{P}(\mathcal{D})$.  
To simplify notation, we define the functional $F[\mu] := \mathcal{F}[\mu \times \ldots \times \mu]$ on $\mathcal{P}(\mathcal{D})$. 

\begin{thm}\label{Theorem 1} Let $f$ be the Gaussian process with continuous kernel $k$ on $\mathcal{D}$, conditioned on the previous $n$-observations. Then  the functional $\mu \rightarrow F\left[\mu \right]$ is concave on $\mathcal{P}(\mathcal{D})$ for $q > 1$.
\end{thm}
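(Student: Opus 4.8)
The plan is to find a representation of $F[\mu]$ in which its dependence on $\mu$ becomes transparently a superposition of concave maps. First I would use Tonelli's theorem to interchange the expectation over the Gaussian process with the expectation over the i.i.d. batch, writing
\begin{equation}
F[\mu] = \mathbb{E}_f\!\left[\, \mathbb{E}_{\mathbf{x}_1,\ldots,\mathbf{x}_q \overset{\text{iid}}{\sim}\mu}\!\left[\max_{i=1,\ldots,q} I_f(\mathbf{x}_i)\right]\right], \qquad I_f(\mathbf{x}) := \max\!\big(0,\, f_n^* - f(\mathbf{x})\big),
\end{equation}
where I have used the elementary identity $\max(0, f_n^* - \min_i f(\mathbf{x}_i)) = \max_i I_f(\mathbf{x}_i)$, valid because $f_n^* - \min_i f(\mathbf{x}_i) = \max_i (f_n^* - f(\mathbf{x}_i))$. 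The interchange is legitimate since $k$ continuous implies $f$ has a.s.\ continuous paths on the compact set $\mathcal{D}$, so $0 \le I_f \le \max(0, f_n^* - \min_{\mathcal{D}} f)$, whose expectation is finite by standard Gaussian supremum bounds; joint measurability of the integrand follows from continuity of $I_f$ in $\mathbf{x}$.

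Next, for a fixed realisation $f$, I would invoke the layer-cake (tail) formula for the nonnegative random variable $\max_i I_f(\mathbf{x}_i)$, using independence of the $\mathbf{x}_i$:
\begin{equation}
\mathbb{E}_{\mathbf{x}_1,\ldots,\mathbf{x}_q \overset{\text{iid}}{\sim}\mu}\!\left[\max_{i} I_f(\mathbf{x}_i)\right] = \int_0^\infty \Big(1 - G_\mu^f(t)^q\Big)\, dt, \qquad G_\mu^f(t) := \mu\big(\{\mathbf{x}\in\mathcal{D} : I_f(\mathbf{x}) \le t\}\big).
\end{equation}
The crucial point is that for each fixed $t$ and $f$ the map $\mu \mapsto G_\mu^f(t) = \int_\mathcal{D} \mathbf{1}_{\{I_f \le t\}}\, d\mu$ is \emph{affine} on $\mathcal{P}(\mathcal{D})$ and takes values in $[0,1]$. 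Since $s\mapsto s^q$ is convex on $[0,1]$ for $q > 1$, precomposition with an affine map shows $\mu \mapsto G_\mu^f(t)^q$ is convex, hence $\mu \mapsto 1 - G_\mu^f(t)^q$ is concave, for every $t \ge 0$ and every path $f$.

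Finally I would conclude using stability of concavity under positive integral superpositions: integrating the concave integrands over $t \in (0,\infty)$ and then taking the expectation over $f$ yields a concave functional, i.e.\ $F$ is concave. To make the $t$-integral and the outer expectation rigorous I would use the bound $1 - s^q \le q(1-s)$ on $[0,1]$, so that $1 - G_\mu^f(t)^q \le q\,\mu(I_f > t)$ and the inner integral is dominated by $q\,\mathbb{E}_{\mathbf{x}\sim\mu}[I_f(\mathbf{x})]$, which is $f$-integrable by the same Gaussian bound as above; measurability in $(t,f)$ follows from monotonicity of $t \mapsto G_\mu^f(t)$. I expect the only genuine work to be this measurability and integrability bookkeeping — the Tonelli interchange and the pathwise validity of the tail formula — since, once the representation is in hand, the concavity collapses to the single observation that $s \mapsto s^q$ is convex on $[0,1]$; an alternative second-variation argument would need extra smoothness and is best avoided.
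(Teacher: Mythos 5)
Your proof is correct and follows essentially the same route as the paper: both reduce $F[\mu]$ via a layer-cake/tail representation to an integral of $1-(\text{affine functional of }\mu)^q$ and invoke convexity of $u\mapsto u^q$ on $[0,1]$, with concavity preserved under the positive superposition over $t$ (or $s$) and the Gaussian process law. The only cosmetic difference is that you work with $\max_i I_f(\mathbf{x}_i)$, which is nonnegative from the outset, whereas the paper shifts $\min_i\widetilde{f}(\mathbf{x}_i)$ by $\tilde{L}=\min_{\mathbf{x}}\widetilde{f}(\mathbf{x})$ to achieve the same nonnegativity before applying the indicator representation; your variant is marginally cleaner but mathematically equivalent.
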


\begin{proof}
We can write the multilinear form as

\begin{equation}
    F[\mu] = \int\ldots\int \text{q-EI}(\mathbf{x}_1, \dots, \mathbf{x}_q)\mu(d\mathbf{x}_1)\ldots \mu(d\mathbf{x}_q),
\end{equation}

Let $\lambda \in [0,1]$ and let $\mu_0$, $\mu_1$ be two probability measures with support on a compact set $ A \subset \mathbb{R}^d$.  Define the interpolated probability measure $\mu_{\lambda} = \lambda \mu_1 + (1-\lambda)\mu_0$.  We present a proof of concavity for general $q>1$.

Consider \begin{equation}F[\mu_\lambda] = \int \ldots \int \text{q-EI}(\mathbf{x}_1,\ldots,\mathbf{x}_q)\mu_{\lambda}(d\mathbf{x}_1)\ldots\mu_{\lambda}(d\mathbf{x}_q).\end{equation}   

We can rewrite this functional as 
\begin{align*}
    F[\mu_\lambda] =&\int \ldots \int  \mathbb{E}_{f}\left[\max \left(0, f_n^* - \min_{i=1,\ldots,q} f(\mathbf{x}_i)\right)\right]\mu_\lambda(d\mathbf{x}_1)\ldots\mu_\lambda(d\mathbf{x}_q),\\
    =& \int \ldots \int  \mathbb{E}_{f}\left[f_n^* - \min_{i=1,\ldots,q} \widetilde{f}(\mathbf{x}_i)\right]\mu_\lambda(d\mathbf{x}_1)\ldots\mu_\lambda(d\mathbf{x}_q),
\end{align*}
where $\widetilde{f}(\mathbf{x})=\min(f(\mathbf{x}),f_n^*)$.  We can express the integrand as an integral.  Indeed, for all $\mathbf{x}\in \mathcal{D}$ and for every realisation of the GP $f$, where $\tilde{L} = \min_\mathbf{x} \widetilde{f}(\mathbf{x})$ (which exists as $f$ has a compact domain), we have:
\begin{align*}
 f_n^* - \min_{i = 1,\ldots,q} \widetilde{f}(\mathbf{x}_i)  = & (f_n^* - \tilde{L}) - \left ( \min_{i = 1,\ldots,q} \widetilde{f}(\mathbf{x}_i) - \tilde{L} \right ), \\
=&
\int_0^\infty \mathbf{1}[s<f_n^* - \tilde{L}] - \mathbf{1}\left[s < \min_{i=1,\ldots,q} \widetilde{f}(\mathbf{x}_i) - \tilde{L} \right] \,ds, \\
=& \int_0^\infty \mathbf{1}[s<f_n^*- \tilde{L}]^q - \prod_{i=1}^q\mathbf{1}\left[s < \widetilde{f}(\mathbf{x}_i)  - \tilde{L}\right] \,ds.
\end{align*}

Through repeated use of Fubini-Tonelli, we can swap the order of the integrals to obtain
\begin{align*}
F[\mu_{\lambda}] &= \int_0^\infty \mathbb{E}_{f_n}\left[\int \ldots \int \mathbf{1}[s<f_n^*- \tilde{L}]^q \mu_{\lambda}(d\mathbf{x}_1)\ldots\mu_{\lambda}(d\mathbf{x}_q)\right]\,ds \\
    &-\int_0^\infty \mathbb{E}_{f_n}\left[\int\ldots\int \prod_{i=1}^q\mathbf{1}\left[s < \widetilde{f}(\mathbf{x}_i)  - \tilde{L}\right] \mu_{\lambda}(d\mathbf{x}_1)\ldots\mu_{\lambda}(d\mathbf{x}_q)\right]\,ds.
\end{align*}
We now use the fact that $\mathbf{x}_i\sim \mu_{\lambda}$ are i.i.d. random variables (up to sets of measure zero, due to the conditioning of the product measure), and so we can considerably simplify the above integrals as 
\begin{equation}
\label{eq:differences}
    F[\mu_{\lambda}] = \int_0^\infty \mathbb{E}_{f_n}\left[\left(\int\mathbf{1}[s<f_n^*- \tilde{L}]\mu_{\lambda}(d\mathbf{x})\right)^q - \left(\int \mathbf{1}[s<\widetilde{f}(\mathbf{x})- \tilde{L}]\mu_{\lambda}(d\mathbf{x})\right)^q\right]\,ds.
\end{equation}

Concavity of the functional is equivalent to the condition that
\begin{equation}
\label{eq:concavity}
    F[\mu_{\lambda}] \geq \lambda F[\mu_1] + (1-\lambda)F[\mu_0]. 
\end{equation}

First note that, for all $\lambda \in [0,1]$,
\begin{equation}
\int\mathbf{1}[s<f_n^*- \tilde{L}]\mu_{\lambda}(d\mathbf{x})=\mathbf{1}[s<f_n^*- \tilde{L}], 
\end{equation}
since $\mu_0$ and $\mu_1$ are probability measures, and the integrand is independent of $\mathbf{x}$.  The $f_n^*- \tilde{L}$ terms cancel across the concavity condition and so  condition \eqref{eq:concavity}  reduces to:
\begin{align*}
    \int_0^\infty \mathbb{E}_{f_n}\left[\left(\int \mathbf{1}[s<\widetilde{f}(\mathbf{x})- \tilde{L}]\mu_{\lambda}(d\mathbf{x})\right)^q\right]   \leq& \lambda \int_0^\infty \mathbb{E}_{f_n}\left[\left(\int \mathbf{1}[s<\widetilde{f}(\mathbf{x})-\tilde{L}]\mu_{1}(d\mathbf{x})\right)^q\right] \\ &  + (1-\lambda)\int_0^\infty \mathbb{E}_{f_n}\left[\left(\int \mathbf{1}[s<\widetilde{f}(\mathbf{x})-\tilde{L}]\mu_{0}(d\mathbf{x})\right)^q\right].
\end{align*}
Note the change in signs.  This is true if, for each realisation $f$ and each $s\geq 0$ that
\begin{align*}
    \left(\int \mathbf{1}[s<\widetilde{f}(\mathbf{x})-\tilde{L}]\mu_{\lambda}(d\mathbf{x})\right)^q  =& \left(\lambda \int  \mathbf{1}[s<\widetilde{f}(\mathbf{x})-\tilde{L}]\mu_{1}(d\mathbf{x})  +(1-\lambda) \int \mathbf{1}[s<\widetilde{f}(\mathbf{x})-\tilde{L}]\mu_{0}(d\mathbf{x})\right)^q, \\
    \leq& \lambda  \left(\int \mathbf{1}[s<\widetilde{f}(\mathbf{x})-\tilde{L}]\mu_{1}(d\mathbf{x})\right)^q  +(1-\lambda)  \left(\int \mathbf{1}[s<\widetilde{f}(\mathbf{x})-\tilde{L}]\mu_{0}(d\mathbf{x})\right)^q.
\end{align*}
For fixed $f, s$, let 
\begin{align*}
A_i(s,f) &= \int \mathbf{1}[s < \widetilde{f}(\mathbf{x})-\tilde{L}]\mu_i(d\mathbf{x}) = \mathbb{P}[s < \widetilde{f}(\mathbf{X}_i)-\tilde{L}\,|\, f] \geq 0,
\end{align*}
where $\mathbf{X}_i \sim \mu_i$.
The above inequality can be written as 
\begin{equation}
\left(\lambda A_1 + (1-\lambda)A_0\right)^q \leq \lambda A_1^q + (1-\lambda)A_0^q.
\end{equation}
Since $q(q-1)u^{q-2} \geq 0$ for $q> 1$ and $u>0$ it follows that the  function $u\rightarrow u^q$ is convex on $\mathbb{R}^+$, so this inequality must hold.  Since the integrals with respect to $f$ and $s$ are linear this implies concavity of the functional $F[\mu]$.
\end{proof}

While the above result ensures the concavity of $F$, it is clearly not strictly concave.  To ensure the existence of a unique, well-defined minimiser, we perturb the acquisition functional with a regularisation term over $\mathcal{P}(\mathcal{D})$ to guarantee strict convexity. Specifically, we regularise by introducing a negative log-entropy term of the form
\begin{equation}
\mathrm{Reg}[\mu] = - \int \log \mu(x) \mu(x) dx,
\end{equation}
so that the inner optimisation problem becomes:
\begin{equation}\label{Objective}
    \mbox{Find } \mu_{\alpha}^* \in \argmax_\mu \{ L_{\alpha}[\mu]:= F[\mu] + \alpha \mathrm{Reg}[\mu] \} ,
\end{equation}
where $\alpha > 0$.   It is straightforward to show that $L_{\alpha}$ is strictly concave for any $\alpha > 0$. Intuitively, the effect of the regularisation will be to promote distributions which have a wider spread.  Note that $q$-EI  already rewards distributions of particles which are promoting diversity, so that the regularisation term merely amplifies that effect further.  
.


\section{Gradient Flows on the Space of Probability Measures}

To solve the optimisation \eqref{Objective} we will introduce a gradient flow over the space of probability measures.   In this context, \emph{gradient flows} represent a set of methods that define strategies to propagate an initial density $\rho_0$ so that it will converge asymptotically to the distribution of interest.  Intuitively, given an initial candidate distribution $\mu_0$, the gradient flow will define a sequence of probability measures $\mu_1, \mu_2, \ldots$, such that $\mu_i \rightarrow \mu_{\alpha}^*$ and where the trajectory follows a path of steepest ascent of $L_{\alpha}$ in an appropriately chosen geometry on $\mathcal{P}(\mathcal{D})$.  Different choices of the underlying geometry will give rise to different gradient flows, which will in turn result in different algorithms for solving \eqref{Objective}


Various gradient flows have been applied in different problems for example, for variational inference~\cite{rezende2015variational}, clustering~\cite{agnelli2010clustering}, variance reduction in MCMC~\cite{albergo2019flow},  implicit generative model training using Maximum Mean Discrepancy~\cite{arbel2019maximum}, and using a sliced Wasserstein metric~\cite{liutkus2019sliced}. In addition, they have been deployed in Reinforcement learning for policy optimisation~\cite{zhang2018policy} and crowd-behaviour modelling~\cite{maury2010macroscopic} among many other applications.

In the following sections, we will derive two specific gradient flows for the objective function \eqref{Objective}, one based on the Stein geometry \cite{duncan2019geometry}, and another using the Wasserstein geometry \cite{santambrogio2017euclidean}.

\subsection{Stein Gradient Flow}
\label{sec:stein}

Stein Variational Gradient Descent (SVGD) is a sampling algorithm that relies on deterministic updates of a set of particles, which are propagated so that the KL divergence between their empirical distribution and the target distribution decreases optimally \cite{liu2016stein}. It was later noticed in \cite{liu2017stein} that the mean-field limit of the SVGD update step could be interpreted as an equivalent gradient flow on the space of probability densities in the \emph{Stein geometry}, whose convergence and properties were investigated in \cite{duncan2019geometry}.

To construct the Stein Gradient flow, we maximise \eqref{Objective}, over a sequence of probability densities $\mu_1, \mu_2, \ldots$, where $\mu_i$ is the push-forward of $\mu_{i-1}$ under a sequence of maps $T_i:\mathcal{D}\rightarrow\mathcal{D}$ of the form $T_i(x) = x + \epsilon \Phi_i(x)$.  The vector fields $\Phi_i$ are chosen to advect the measure $\mu_{i_1}$ in the direction of steepest ascent of $L_{\alpha}$.  More specifically, suppose that $\mu$ is a density on $\mathbb{D}$.  Then, to calculate the next probability density in the gradient flow,  we must solve the problem
\begin{equation}\label{ObjectivePhi}
    \Phi^* = \argmax_{\Phi \in \mathcal{C}} \frac{d}{d\epsilon} \left[ F[\mathbf{T}_\#(\mu)]+\alpha \mathrm{Reg}[\mathbf{T}_\#(\mu)] \right]  \Bigr \rvert_{\epsilon = 0},
\end{equation} 
where $\mathcal{C}$ is an admissible class of vector fields on $\mathcal{D}$.    We can focus on the two terms in \eqref{ObjectivePhi} separately. Through easy calculations we obtain that the first term becomes:
\begin{equation}\label{ObjectiveF}
\frac{d}{d\epsilon} F\left[\mathbf{T}_\#(\mu)\right]\Bigr \rvert_{\epsilon = 0} = \int \ldots \int \sum_{i=1}^q \frac{\partial \text{q-EI}}{\partial \mathbf{x}_i }(\mathbf{x}_1,\ldots,\mathbf{x}_q)\Phi(\mathbf{x}_i) \mu(d\mathbf{x}_1)\ldots\mu(d\mathbf{x}_q).
\end{equation}

Using Jacobi's formula, the regularisation term becomes then:
\begin{equation}\label{ObjectiveReg}
\frac{d}{d\epsilon}\mathrm{Reg}\left[\mathbf{T}_\#(\mu)\right]\Bigr \rvert_{\epsilon = 0} = \int \nabla \cdot \Phi(\mathbf{x}) \mu(d\mathbf{x}).
\end{equation}

For full calculations and details please refer to Appendix \ref{sec:AppA}.  If $\mathcal{C}$ is chosen to be $\{ f \in \mathcal{H}^d: ||f|| \leq 1\}$ where  $\mathcal{H}^k$ is the product of $d$ copies of a Reproducing Kernel Hilbert Space (RKHS)  $\mathcal{H}$ with kernel differentiable $k(\cdot, \cdot)$, then we obtain that:
\begin{equation}\label{Phi}
    \Phi^*(\mathbf{z}) =  \underbrace{\int \ldots \int \sum_{i=1}^q \frac{\partial \text{q-EI}}{\partial \mathbf{x}_i }(\mathbf{x}_1,\ldots,\mathbf{x}_q)k(\mathbf{x}_i,\mathbf{z}) \mu(d\mathbf{x}_1)\ldots\mu(d\mathbf{x}_q)}_{\Phi_1^*(\mathbf{z})} + \underbrace{\alpha \int \nabla_\mathbf{x} k(\mathbf{x}, \mathbf{z}) \mu(d\mathbf{x})}_{\Phi_2^*(\mathbf{z})}.
\end{equation}

In order to calculate \eqref{Phi}, we must be able to evaluate $\frac{\partial \text{q-EI}}{\partial \mathbf{x}_i }$. However, as q-EI is not differentiable, we will now focus on carefully redefining the acquisition function, so that its partial derivatives will exist. To address this issue, we shall introduce the smoothed approximation  $\text{q-EI} \approx g(\alpha)$, where $g$ is the LogSumExp function with the first argument set to 0, in place of $\max(0, \cdot)$, and $\alpha(\mathbf{x}_1,\ldots,\mathbf{x}_q) = (f^* - f(\boldsymbol{x}_1),\ldots,f^* - f(\boldsymbol{x}_q))^T$. In this way, we have constructed an estimator of \eqref{qEI} such that the gradients are easy to evaluate and we can write, for $ i = 1,\ldots,q$: 
\begin{equation} \label{gradient}
\frac{\partial g(\alpha)}{\partial \mathbf{x}_i }(\mathbf{x}_1,\ldots,\mathbf{x}_q) = - \frac{e^{(f^*-f(\mathbf{x}_i))}}{ 1 + \sum_{k=1}^{q} e^{(f^*-f(\mathbf{x}_k))}} \nabla f ({\mathbf{x}_i}).
\end{equation} 

Hence, note that given any permutation $\sigma \in S(q)$ we have:
\begin{align}\label{permutation}
    \frac{\partial g(\alpha)}{\partial \mathbf{x}_i} (\mathbf{x}_{\sigma(1)},\ldots,\mathbf{x}_{\sigma(q)}) &= - \frac{e^{(f^*-f(\mathbf{x}_{\sigma(i)}))}}{1+ \sum_{k=1}^{q} e^{(f^*-f(\mathbf{x}_{\sigma(k)}))}} \nabla f ({\mathbf{x}_{\sigma(i)}}), \nonumber \\
    &= \frac{\partial {g(\alpha)}}{\partial \mathbf{x}_{\sigma(i)}}(\mathbf{x}_{1},\ldots,\mathbf{x}_{q}).
\end{align}

To estimate the integral in \eqref{Phi}, one could resort to standard Monte-Carlo estimates of this $q$-dimensional integral.  Noting that $h(\mathbf{x}_1,\ldots,\mathbf{x}_q) = \sum_{i=1}^q \frac{\partial g(\alpha)}{\partial \mathbf{x}_i }(\mathbf{x}_1,\ldots,\mathbf{x}_q)k(\mathbf{x}_i,\mathbf{z})$ is symmetric in its arguments,  we can replace this with a U-statistic \cite{hoeffding1992class} estimator of the form,
\begin{equation}\label{MCMCestimator}
\widetilde{\Phi}_1^*(\mathbf{z}) = \frac{1}{\binom{N}{q}} \sum_{\gamma \in C_{N,q}} \sum_{m=1}^q \frac{\partial g(\alpha)}{\partial \mathbf{x}_m }(\mathbf{x}_{\gamma(1)},\ldots,\mathbf{x}_{\gamma(q)})k(\mathbf{x}_{\gamma(m)},\mathbf{z}),
\end{equation}
where $C_{N,q}$ indicates all possible combinations of $q$ elements over the $N$ available.  For $N \gg q$ this provides an accurate approximation of the $q$-fold integral requiring only $\binom{N}{q}$ gradient evaluations instead of a naive Monte-Carlo estimator which would require $N^q$. These types of estimators have favourable properties: firstly, if we let $ \sigma_1^2 = \mathrm{Var}[ \mathbb{E}[h(\mathbf{X}_1,...,\mathbf{X}_q) | \mathbf{X}_1] ]$, then, as a corollary to what showed in \cite{hoeffding1992class}, we know that the variance of the U-statistics is asymptotically $(q^2 \sigma_1^2)/N$ for increasing $N$ \cite{wang2014variance}; secondly, an equivalent strong law of large numbers holds, meaning that if the integral $\Phi_1^*$ as given in \eqref{Phi} is finite, then the estimator as given by  $\widetilde{\Phi}_1^*$ \eqref{MCMCestimator} converges almost surely as $N$ grows to infinity \cite{hoeffding1961strong}.

The second integral in  \eqref{Phi} can be evaluated using a standard Monte Carlo approximation. Given $\mathbf{x}_1, \ldots, \mathbf{x}_N \sim \mu(d\mathbf{x})$ i.i.d. then we use the estimator
\begin{equation}\label{MCMC Pen}
\widetilde{\Phi}_2^*(\mathbf{z}) = \frac{\alpha}{N} \sum_{i = 1}^N \nabla_{\mathbf{x}_i} k(\mathbf{x}_i, \mathbf{z}).
\end{equation}
Combining these terms we obtain
\begin{equation}\label{final Phi}
\widetilde{\Phi}^*(\mathbf{z}) =\frac{1}{\binom{N}{q}} \sum_{\gamma \in C_{N,q}} \sum_{i=1}^q
   \mathbb{E}\left[\frac{\partial g(\alpha)}{\partial \mathbf{x}_i }(\mathbf{x}_{\gamma(1)},\ldots,\mathbf{x}_{\gamma(q)})\right]k(\mathbf{x}_{\gamma(i)},\mathbf{z}) +  \frac{\alpha}{N} \sum_{i = 1}^N \nabla_{\mathbf{x}_i} k(\mathbf{x}_i, \mathbf{z}),
\end{equation}
where the expectation can be approximated as we will show in Section \ref{GP estimation}.

\subsection{Wasserstein Particle Gradient Flow}
\label{sec:wasserstein}
In this section we present an alternate gradient flow for \eqref{Objective}, based on the Wasserstein geometry on the set $\mathcal{P}_d(\mathcal{D})$ of densities on $\mathcal{D}$.  This is the pseudo-Riemmanian metric induced by the $\mathcal{W}_2$ distance. Given that probability measures with atoms are not admissible for the regularised objective function $L_{\alpha}$, this geometry is a natural candidate for formulating a gradient flow.   In this setting,  the gradient flow will take the form of an evolution of densities satisfying 
\begin{equation}
\label{wass:gf}
    \partial \mu_t  = \nabla_{\mathcal{W}_2} L_{\alpha}(\mu_t),
\end{equation}
where $\nabla_{\mathcal{W}_2}$ indicates the notion of gradient in the metric space we are in~\cite{lavenant2018dynamical}. These kind of flows are strongly connected to partial differential equations~\cite{santambrogio2017euclidean}, and similarly to gradient descent, they move along the steepest direction to find the optimal measure of interest. \\

As seen in \cite{mokrov2021large}, given the functional $L_{\alpha}$ in \eqref{Objective}, the Wasserstein gradient flow \eqref{wass:gf} can be expressed as a continuity equation of the form:
\begin{equation}\label{Wasserstein GF}
    \frac{\partial \mu_t}{\partial t} = \mathrm{div}(\mu_t \nabla_\mathbf{x} L_{\alpha}'[\mu_t]),
\end{equation}
where $L_{\alpha}'[\mu_t]$ is the first variation of $L$. To calculate this, we again consider separately the objective and regularisation terms in $L_{\alpha}$. For the objective term  we can directly state:
\begin{equation}
    \frac{d}{d\epsilon} F[\mu + \epsilon \nu]  \Bigr \rvert_{\epsilon = 0} = \int \left(q \int \ldots \int \text{q-EI}(\mathbf{x}_1, \ldots, \mathbf{x}_q)\mu(d\mathbf{x}_1) \ldots \mu(d\mathbf{x}_{q-1}) \right) \nu(d\mathbf{x}_q).
\end{equation}
We obtain that the first variation for the functional $F$ is
\begin{equation}
F'[\mu_t] = H[\mu, \mathbf{x}]= q \frac{\partial}{\partial{\mathbf{x}_q}} \int \ldots \int  \left(\text{q-EI}(\mathbf{x}_1, \ldots,\mathbf{x}_{q-1}, \mathbf{x}) \right) \mu(d\mathbf{x}_1)\ldots\mu(\mathbf{x}_{q-1}).
\end{equation}

We can conclude by writing: 
\begin{equation}\label{FV-OBJ}
     \mathrm{div}(\mu_t \nabla_{\mathbf{x}}F'[\mu_t]) = \partial_{\mathbf{x}}(\mu_t) H[\mu_t, \mathbf{x}] + \mu_t \partial_{\mathbf{x}}H[\mu_t, \mathbf{x}].
\end{equation}

Full details of the derivations for both terms are given in Appendix B.   On the other hand we have that:
\begin{align*}
    \frac{d}{d\epsilon} \text{Reg}[\mu + \epsilon \nu]  \Bigr \rvert_{\epsilon = 0} = \int \log (\mu(\mathbf{x})) + 1 )\nu(d\mathbf{x}),
\end{align*}
from which we derive that the first variation for the regularisation term is Reg$'[\mu] = \log (\mu) + 1 $. Hence, we can write:
\begin{align}\label{FV-REG}
    \mathrm{div}(\mu_t \nabla \mathrm{Reg}'[\mu])  = \Delta \mu_t.
\end{align}
Therefore, by summing together \eqref{FV-OBJ} and \eqref{FV-REG} we obtain:
\begin{equation}\label{Flow}
    \frac{\partial \mu_t}{\partial t} = \partial_{\mathbf{x}}(\mu_t) H[\mu_t, \mathbf{x}] + \mu_t \partial_{\mathbf{x}}H[\mu_t, \mathbf{x}] + \Delta \mu_t.
\end{equation}

As highlighted in \cite{liutkus2019sliced, durmus2022sticky}, this  PDE is a nonlinear Fokker-Planck equation which characterises the evolution of density of a McKean-Vlasov type  stochastic differential equation of the form:
\begin{equation}
\label{eq:sde}
d\mathbf{X}_t = H[\mu_t, \mathbf{X}_t]dt + \sqrt{2}dW_t, \quad \mu_t = \mathrm{Law}(\mathbf{X}_t),
\end{equation}
where $W_t$ is a standard Wiener process on $\mathbb{R}^d$. If we consider  a set of moving particles $\{{\mathbf{x}}_t^{(j)}\}_{j=1}^N$ and their associated empirical distribution $\hat{\mu}_t = \frac{1}{N}\sum_{i= 1}^N \delta_{{\mathbf{x}}_t^{(j)}}$, we can plug in these quantities to obtain a system of $n$ particles approximating \eqref{eq:sde}:
\begin{equation}
\label{eq:sde_particle}
    d{\mathbf{x}}^{(i)}_t = H[\hat{\mu}_t, {\mathbf{x}}^{(i)}_t]dt + \sqrt{2}dW^{(i)}_t, \quad i = 1,\ldots,N.
\end{equation}

Establishing the convergence of \eqref{eq:sde_particle}  to its mean field limit \eqref{eq:sde} as $N\rightarrow \infty$ is a challenging problem which we defer to future work. This set of n non-linear stochastic differential equations can be further approximated by introducing an Euler-Maruyama discretisation as follows, for $i = 1, \ldots, N$:
        \begin{equation}\label{SDE_PGF}
        \mathbf{x}^{(i)}_{t+1} = \mathbf{x}^{(i)}_{t} + \Delta t H[\hat{\mu}_t, {\mathbf{x}}^{(i)}_t] + \sqrt{2\Delta t \alpha}Z^i_t, \quad t=1,\ldots,T,
    \end{equation}
where $Z^{(i)}_t \sim \mathcal{N}(\mathbf{0}, \mathbf{I}_{d\times d})$ is a standard multivariate normal vector, after choosing an initial set of particles $\{\mathbf{x}^{(i)}_{0}\}_{i=1}^N$.

Computing the drift term poses a significant challenge due to the high-dimensional integral.  Using the smooth approximation $g$ to the non-differentiable integrand, as defined in the previous section, then we have that
\begin{equation}\label{H}
H[\mu , \mathbf{x}] = q \mathbb{E}_{\mathbf{X}_1,\ldots,\mathbf{X}_{q-1} \sim \mu} \left[ \frac{\partial g(\alpha)}{\partial \mathbf{x}_q} (\mathbf{X}_1,\ldots,\mathbf{X}_{q-1}, \mathbf{x}) \right].
\end{equation}

We note that the function $f_{\mathbf{x}}(\mathbf{x}_1,\ldots,\mathbf{x}_{q-1}) = \frac{\partial g(\alpha)}{\partial \mathbf{x}_q} (\mathbf{x}_1,\ldots,\mathbf{x}_{q-1}, \mathbf{x})$, of which we want to take the expectation, is symmetric in its arguments. Therefore we can once use again the U-statistic. Consider the set of particles introduced in previous section $\{{\mathbf{x}}_t^{(i)}\}_{i = 1}^N$ and their empirical distribution $\hat{\mu}_t$  and write:
\begin{equation}\label{H-Ustats}
    \tilde{H}[\hat{\mu}_t, \mathbf{x}] = \frac{q}{\binom{n}{q-1}} \sum_{\gamma \in C_{N,q-1}} \frac{\partial g(\alpha)}{\partial \mathbf{x}_q }({\mathbf{x}}_t^{\gamma(1)},\ldots,{\mathbf{x}}_t^{\gamma(q-1)}, \mathbf{x}).
\end{equation}where $ C_{N,q-1}$ indicates the possible combinations of $q-1$ objects over $N$ available.  Given an initial set of points $\{\mathbf{x}^{(i)}_{0}\}_{i=1}^N$ and for $i = 1,\ldots, N$, the resulting scheme is given by
\begin{equation}\label{EM - step}
    \mathbf{x}^{(i)}_{t+1} = \mathbf{x}^{(i)}_{t} + \frac{\epsilon_w}{{\binom{N}{q-1}}} \sum_{\gamma \in C_{N,q-1}}  \mathbb{E} \left[\frac{\partial g(\alpha)}{\partial \mathbf{x}_q }({\mathbf{x}}_t^{\gamma(1)},\ldots,{\mathbf{x}}_t^{\gamma(q-1)},\mathbf{x}^{(i)}_t)\right] + \alpha_w Z^i_t, \quad t=1,\ldots,T,
\end{equation}
where $\epsilon_{w}:= \Delta t q$, $\alpha_w:=  \sqrt{2\Delta t \alpha}$, and $Z^i_t \sim \mathcal{N}(\mathbf{0}, \mathbf{I}_{d\times d})$ are i.i.d random vectors. The expectation is estimated as explained in Section \ref{GP estimation}.

\subsection{Estimation of q-EI gradient}\label{GP estimation}
In order to be able to evaluate \eqref{MCMCestimator} and \eqref{H-Ustats} we would need to be able to have function evaluations and gradient information, as given by Equation \eqref{gradient}. Since $f$ is a GP, we shall instead estimate $\mathbb{E}[\frac{\partial g(\alpha)}{\partial \mathbf{x}_i }(\boldsymbol{x})]$, $i = 1,\ldots,q$ and use this quantity instead. This is easily achievable thanks to the properties of GPs; as $f$ is modelled using a GP we know that its gradient is still a GP~\cite{rasmussen2003gaussian}. Moreover, the joint distribution $[f, \nabla f]$ will still be a GP from which we can easily sample from. More formally, if we assume that the prior mean is 0 and covariance kernel $k$ we will have the following:

\begin{equation}
\left[\begin{array}{c}
f \\
\nabla f \\
\end{array}\right] \sim \mathcal{G} \mathcal{P}\left(\mathbf{0}, \mathbf{K}_{\left[\mathbf{f}, \nabla \mathbf{f}\right]}\right),
\end{equation}
where
\begin{equation}\quad 
\mathbf{K}_{\left[{f}, \nabla {f}\right]} = 
\left[\begin{array}{cc}
\mathbf{k} & \mathbf{k}_{[f, \nabla f]}  \\
\mathbf{k}_{[\nabla f, f]} & \mathbf{k}_{[\nabla f, \nabla f]} 
\end{array}
\right].
\end{equation}

If $k$ is twice differentiable, the element of the matrix can be written as follows:

\begin{equation}\mathbf{k}_{[f, \nabla f]}(x,y) = \frac{\partial}{\partial y}k(x,y),\end{equation}
\begin{equation}\mathbf{k}_{[\nabla f, \nabla f]} = \frac{\partial^2}{\partial x\partial y}k(x,y). \end{equation}

Hence, by sampling jointly from the posterior distribution $[f, \nabla f | \mathbb{D}_n]$ we can estimate using Monte Carlo integration the expectation of the gradient in \eqref{gradient}. More formally, call $(f(\mathbf{x}_1),\ldots,f(\mathbf{x}_n)) = \mathbf{f}_{1:n}$. Select $n^*$ new locations $\mathbf{x}^*$ and let us denote $\mathbf{f}^* = \mathbf{f}(\mathbf{x}^*)$ and $\nabla \mathbf{f}^* = \nabla \mathbf{f}(\mathbf{x}^*) $. The vector $[\mathbf{f}_{1:n}, \mathbf{f}^*, \nabla \mathbf{f}^* ] \in \mathbb{R}^{n+n^* \times (d+1)}$, is multivariate normal because of the GP prior we imposed. In other words:

\begin{equation}
\left[\begin{array}{c}
\mathbf{f}_{1: n} \\
\mathbf{f}^{*} \\ 
\nabla \mathbf{f}^* \\
\end{array}\right] \sim \mathcal{N}\left(\mathbf{0},
\left[\begin{array}{ccc}
\mathbf{k}(\mathbf{x}, \mathbf{x}) & \mathbf{k}(\mathbf{x}, \mathbf{x}^*) & \mathbf{k}_{[f, \nabla  f]}(\mathbf{x}, \mathbf{x}^*) \\
\mathbf{k}(\mathbf{x^*}, \mathbf{x}) & \mathbf{k}(\mathbf{x^*}, \mathbf{x^*}) &\mathbf{k}_{[f, \nabla f]}(\mathbf{x^*}, \mathbf{x^*}) \\
\mathbf{k}_{[\nabla f, f]}(\mathbf{x^*}, \mathbf{x})& \mathbf{k}_{[\nabla f, f]}(\mathbf{x^*}, \mathbf{x^*}) & \mathbf{k}_{[\nabla f, \nabla f]}(\mathbf{x^*}, \mathbf{x^*})
\end{array}\right]\right).
\end{equation}

By standard properties of the multivariate normal distribution we can explicitly write down the joint posterior distribution:
\begin{equation}\label{posterior gauss}
    \mathbf{f}^{*} , \nabla \mathbf{f}^{*} \mid \mathbf{f}_{1: n}, \sim \mathcal{N}\left({\mu}\left(\mathbf{x}^{*}\right), {\sigma}^{2}\left(\mathbf{x}^{*}\right)\right).
\end{equation}

Let use denote with $
{\mathbf{k}}^{*}= [\mathbf{k}(\mathbf{x}, \mathbf{x}^*) \quad  \mathbf{k}_{[f, \nabla  f]}(\mathbf{x}, \mathbf{x}^*)]^{\text{T}} 
$. Then we have:
\begin{equation}
\begin{aligned}
{\mu}\left(\mathbf{x}^{*}\right) &= {\mathbf{k}^{*}}^\top \mathbf{k}^{-1}(\boldsymbol{x}, \boldsymbol{x})^\top  \mathbf{f},\\ 
\sigma^2 \left(\mathbf{x}^{*}\right) &= \mathbf{K}_{ \nabla \left[{f}, \nabla {f}\right]}(\boldsymbol{x}^*, \boldsymbol{x}^*) - {\mathbf{k}^{*}}^\top \mathbf{k}^{-1}(\boldsymbol{x}, \boldsymbol{x})\mathbf{k}^{*}.
\end{aligned}
\end{equation}

Hence, we can draw samples from this multivariate normal distribution. Fix $M \in \mathbb{N}$ and select a set of $q$ locations $\boldsymbol{x}^*$. Call $\{\mathbf{f}_j^{*} , \nabla \mathbf{f}_j^{*} \}_{j = 1}^{M}$ the posterior sample evaluated at the desired $q$ inputs. Then we can write, for $i=1,\ldots,q$:

\begin{equation}\label{expected gradient}
    \mathbb{E}\left[\frac{\partial g(\alpha)}{\partial \mathbf{x}_i }(\mathbf{x})\right] \approx -\frac{1}{M} \sum_{j=1}^M \frac{e^{\left(f^{*}-f_j\left(\mathbf{x}_{i}\right)\right)}}{\sum_{k=1}^{q} e^{\left(f^{*}-f_j\left(\mathbf{x}_{k}\right)\right)}} \nabla f_j\left(\mathbf{x}_{i}\right). 
\end{equation}

\section{Implementation}
In the following section we will explain the implementation of both methods. Code\footnote{Available at \url{https://github.com/enricocrovini/BBO-via-PGF}} is written in python and we make use of the \url{jax} library~\cite{jax2018github}. The main steps for the BO scheme based on Stein Gradient Flow are listed in Algorithm \eqref{algo stein}, while the scheme backed on the Wasserstein Gradient Flow is described in Algorithm \ref{algowass}. 


As seen in \ref{GP estimation}, the  GP prior we use to model the target function needs to have a covariance kernel which is at least doubly differentiable; a suitable candidate could be Matern $5/2$ kernel as it fulfils the required property. Additionally, this choice of kernel is motivated as it is not desireable to impose an excessively smooth structure on the target, which, as it is to be considered black-box, might be noisy and not satisfy any regularity assumptions. In this context, to allow more flexibility for the modelling of the target, we allow the possibility to add to the Matern $5/2$ a noise kernel. At each iteration, as the evaluations of the expensive target are retrieved, we optimise the value of the kernel hyperparameters using \url{scikit-learn}. 

Once we have all the tools to sample from the posterior of the GP, conditional on the current information about the function, the inner optimisation loop of q-EI starts. 
The initial particles' locations are initialised using Latin Hypercube Sampling~\cite{mckay2000comparison} using the \url{scikit-optimize} package. These particles' positions give rise to an empirical distribution over the domain in which the target is defined, which approximates the initial density used for both Stein and Wasserstein flows.

The numerical schemes arising from the gradient flows derived in Sections \ref{sec:stein}  and \ref{sec:wasserstein}  are then integrated within a Batch Bayesian Optimisation scheme,  where the unknown objective function $f$ is explored by performing $N$ evaluations at each outer loop step, and with an acquisition functional based on $q$-EI within the inner-loop, as given by \eqref{generalqEI}.  {Here, the batch size $q$ is assumed to be smaller than the ensemble size $N$.}  This offers an alternative strategy to existing Batch BO methods where multiple independent batches of $q$ points are evaluated at every outer loop step and the best-performing batch of $q$ points is selected for evaluation~\cite{wang2020parallel}.   In our approach, the $N$ particles will form multiple batches of size $q$, but information is shared across these multiple batches, and the regularisation term promotes diversity in their exploration of the domain.  
\\

\begin{algorithm}
\caption{Batch BO via Stein PGF}\label{algo stein}
\KwInput{Dataset $\mathbb{D}_{n} = \{ \mathbf{x}_i, f(x_i)\}_{i = 1}^{_{n}}$, { batch size $q$, ensemble size $N$,}
length of optimisation $T$, number of BO iterations $I$, number of GP samples $M$, regularisation constant $\alpha$, Stein Kernel $k$, step size $\epsilon$.}
\KwOutput{Updated dataset $\mathbb{D}_{n}$ with locations at which function has been evaluated.}
\begin{algorithmic}[1]
\FOR{$i= 1,\dots,I$}
    \STATE Fit a GP to $\mathbb{D}_n$; 
        \STATE select random initial location $\mathbf{x}^0 \in \mathbb{R}^{N\times d}$;
         \FOR{$t=0,\dots,T-1$}
            \STATE Draw $M$ samples jointly from GP and its gradient at $\mathbf{x}^t$, conditional on $\mathbb{D}_n$;
            \STATE Evaluate gradients of acquisition function as in \eqref{expected gradient};
            \STATE Evaluate Objective Integral as given by \eqref{MCMCestimator};
            \STATE Evaluate Penalisation Integral as given by \eqref{MCMC Pen};
            \STATE Evaluate $\widetilde{\Phi}^*$ as given by \eqref{final Phi}:
            \STATE  $\mathbf{x}_r^{t+1} \gets \mathbf{x}_r^t +\epsilon \widetilde{\Phi}^*(\mathbf{x}_r^t)$;
        \ENDFOR
    \STATE $\mathbf{x}^{\text{new}} \gets \mathbf{x}^{T}$

    \STATE$ \mathbb{D}_{n} \gets \mathbb{D}_{n} \cup \{\mathbf{x}^{\text{new}}_{j}, f(\mathbf{x}^{\text{new}}_{j})\}_{j=1}^{N} $
\ENDFOR
\end{algorithmic}
\end{algorithm}

\begin{algorithm}
\caption{Batch BO via Wasserstein PGF}\label{algowass}
\KwInput{Dataset $\mathbb{D}_{n} = \{ \mathbf{x}_i, f(x_i)\}_{i = 1}^{_{n}}$, { batch size $q$, ensemble size $N$,} length of optimisation $T$, number of BO iterations $I$, number of GP samples $M$, regularisation constant $\alpha_w$, step size $\epsilon_w$.}
\KwOutput{Updated dataset $\mathbb{D}_{n}$ with locations at which function has been evaluated.}
\begin{algorithmic}[1]
\FOR{$i= 1,\dots,I$}
    \STATE Fit a GP to $\mathbb{D}_n$; 
        \STATE select random initial location $\mathbf{x}^0 \in \mathbb{R}^{N\times d}$;
         \FOR{$t=0,\dots,T-1$}
            \STATE Draw $M$ samples jointly from GP and its gradient at $\mathbf{x}^t$, conditional on $\mathbb{D}_n$;
            \STATE Evaluate gradients of acquisition function as in \eqref{expected gradient};
            \STATE Evaluate $\tilde{H}$ as given by \eqref{H-Ustats};
            \STATE Evaluate Euler Maruyama step as given in \eqref{EM - step} and update $\mathbf{x}^{t+1}$;
        \ENDFOR
    \STATE $\mathbf{x}^{\text{new}} \gets \mathbf{x}^{T}$

    \STATE$ \mathbb{D}_{n} \gets \mathbb{D}_{n} \cup \{\mathbf{x}^{\text{new}}_{j}, f(\mathbf{x}^{\text{new}}_{j})\}_{j=1}^{N} $
\ENDFOR
\end{algorithmic}
\end{algorithm}

\section{Numerical Results}

We demonstrate the behaviour of this scheme on a number of benchmark functions for Bayesian Optimisation, specifically the Ackley and Griewank functions, both in the 2 dimensional and higher dimensional cases. We compare our method to qEI as implemented in the Cornell-Moe package\footnote{\url{https://github.com/wujian16/Cornell-MOE}}, and Local Penalization~\cite{gonzalez2016batch}.

Both gradient flows schemes are run for a fixed number of $I$ steps and the final $N$ points are then sent for evaluations. Both methodologies additionally contain other parameters linked to the amount of regularisation we want to perform ($\alpha$, the Stein kernel and its length-scale and $\alpha_w$), the step size by which particles are propagated ($\epsilon$ and $\epsilon_w$) and the size of the batch $q$ for the target q-EI. We therefore need to calibrate and tune these values accordingly. Ideally, we want to allow for just enough regularisation so that q-EI is maximised in expectation while also retaining diversity in the particles, and set a step size so that the particles converge to an equilibrium within the time limit $I$, without being pushed too strongly. At the current moment, we set these parameters a priori by monitoring the norm of the vector that moves the particles, namely $\widetilde{\Phi}^*$ as given in \eqref{final Phi} for the Stein flow and the drift term in \eqref{EM - step} for the Wasserstein flow. More specifically, the numbers are set so that the norm of these terms decays to 0, meaning that the particles do not evolve anymore, having reached an equilibrium. The Stein kernel is fixed for all methods to be Matern $5/2$ and other parameter choices and settings are provided in Table \ref{Table Params}. \\

\begin{table}[]
\begin{tabular}{c|cc|cc|cc|cc|cc|}
\cline{2-11}
                                                                                                    & \multicolumn{2}{c|}{\textbf{Ackley}}                & \multicolumn{2}{c|}{\textbf{Ackley 5}}              & \multicolumn{2}{c|}{\textbf{Griewank}}              & \multicolumn{2}{c|}{\textbf{Griewank5}}             & \multicolumn{2}{c|}{\textbf{Lorentz63}}             \\ \cline{2-11} 
                                                                                                    & \multicolumn{1}{c|}{\textit{Stein}} & \textit{Wass} & \multicolumn{1}{c|}{\textit{Stein}} & \textit{Wass} & \multicolumn{1}{c|}{\textit{Stein}} & \textit{Wass} & \multicolumn{1}{c|}{\textit{Stein}} & \textit{Wass} & \multicolumn{1}{c|}{\textit{Stein}} & \textit{Wass} \\ \hline
\multicolumn{1}{|c|}{\textbf{Bounds}}                                                               & \multicolumn{2}{c|}{$(-5,5)^2$}                     & \multicolumn{2}{c|}{$(-3,3)^5$}                     & \multicolumn{2}{c|}{$(-500,500)^2$}                 & \multicolumn{2}{c|}{$(-500,500)^5$}                 & \multicolumn{2}{c|}{$(20,40)\times (0,10)$}         \\ \hline
\multicolumn{1}{|c|}{\textbf{\begin{tabular}[c]{@{}c@{}}Starting \\ Points\end{tabular}}}           & \multicolumn{2}{c|}{10}                             & \multicolumn{2}{c|}{50}                             & \multicolumn{2}{c|}{10}                             & \multicolumn{2}{c|}{50}                             & \multicolumn{2}{c|}{5}                              \\ \hline
\multicolumn{1}{|c|}{\textbf{N}}                                                                    & \multicolumn{2}{c|}{10}                             & \multicolumn{2}{c|}{10}                             & \multicolumn{2}{c|}{10}                             & \multicolumn{2}{c|}{10}                             & \multicolumn{2}{c|}{10}                             \\ \hline
\multicolumn{1}{|c|}{\textbf{q}}                                                                    & \multicolumn{1}{c|}{3}              & 3             & \multicolumn{1}{c|}{3}              & 3             & \multicolumn{1}{c|}{3}              & 3             & \multicolumn{1}{c|}{3}              & 3             & \multicolumn{1}{c|}{3}              & 3             \\ \hline
\multicolumn{1}{|c|}{\textbf{M}}                                                                    & \multicolumn{2}{c|}{500}                            & \multicolumn{2}{c|}{1000}                           & \multicolumn{2}{c|}{500}                            & \multicolumn{2}{c|}{1000}                           & \multicolumn{2}{c|}{500}                            \\ \hline
\multicolumn{1}{|c|}{\textbf{T}}                                                                    & \multicolumn{1}{c|}{3000}           & 1500          & \multicolumn{1}{c|}{3000}           & 1500          & \multicolumn{1}{c|}{5000}           & 2000          & \multicolumn{1}{c|}{5000}           & 3000          & \multicolumn{1}{c|}{6000}           & 3000          \\ \hline
\multicolumn{1}{|c|}{\textbf{$\alpha$}}                                                             & \multicolumn{1}{c|}{0.02}           & *             & \multicolumn{1}{c|}{0.01}           & *             & \multicolumn{1}{c|}{1e-3}           & *             & \multicolumn{1}{c|}{1e-4}           & *             & \multicolumn{1}{c|}{0.05}           & *             \\ \hline
\multicolumn{1}{|c|}{\textbf{\begin{tabular}[c]{@{}c@{}}Length-scale \\ Kernel Stein\end{tabular}}} & \multicolumn{1}{c|}{0.5}            & *             & \multicolumn{1}{c|}{0.5}            & *             & \multicolumn{1}{c|}{0.5}            & *             & \multicolumn{1}{c|}{0.5}            & *             & \multicolumn{1}{c|}{1}              & *             \\ \hline
\multicolumn{1}{|c|}{\textbf{$\epsilon$}}                                                           & \multicolumn{1}{c|}{0.5}            & *             & \multicolumn{1}{c|}{0.5}            & *             & \multicolumn{1}{c|}{0.3}            & *             & \multicolumn{1}{c|}{0.5}            & *             & \multicolumn{1}{c|}{0.5}            & *             \\ \hline
\multicolumn{1}{|c|}{\textbf{$\alpha_w$}}                                                           & \multicolumn{1}{c|}{*}              & 0.01          & \multicolumn{1}{c|}{*}              & 5e-3          & \multicolumn{1}{c|}{*}              & 1e-3          & \multicolumn{1}{c|}{*}              & 5e-4          & \multicolumn{1}{c|}{*}              & 5e-3          \\ \hline
\multicolumn{1}{|c|}{\textbf{$\epsilon_w$}}                                                         & \multicolumn{1}{c|}{*}              & 0.5           & \multicolumn{1}{c|}{*}              & 0.5           & \multicolumn{1}{c|}{*}              & 0.3           & \multicolumn{1}{c|}{*}              & 0.5           & \multicolumn{1}{c|}{*}              & 0.5           \\ \hline
\multicolumn{1}{|c|}{\textbf{\begin{tabular}[c]{@{}c@{}}GP model\\  with noise\end{tabular}}}       & \multicolumn{2}{c|}{False}                          & \multicolumn{2}{c|}{False}                          & \multicolumn{2}{c|}{True}                           & \multicolumn{2}{c|}{True}                           & \multicolumn{1}{c|}{True}           & True          \\ \hline
\end{tabular}\caption{List of all fixed parameters used for the examples.}\label{Table Params}
\end{table}

In order to compare methods, we run the optimisation routines to test robustness to different starting points. Since the methods might yield different results even with the same random initialisation, we will also test for robustness within the same experiment. Therefore, we initialise $10$ random locations, and for each of these, we run the optimisation procedure $5$ times. As a metric of comparison we use log regret, i.e. the logarithm of the minimum function value observed at each iteration, more formally: $\mathrm{Log-Regret}(\mathcal{D}_n) = \log [\min_{\mathbf{x} \in \mathcal{D}_n} f(\mathbf{x}) - \widetilde{f} ]$, where $\widetilde{f}$ indicates the true minimum value. We record the median log-regret across the 5 routines for each starting locations. The plots show the median, $10\%$ and $90\%$ quantiles of the 10 previously evaluated medians.  We demonstrate the convergence of the optimisation procedure in Figure \ref{SyntheticFunc} to highlight the evolution of the optimisation for each iteration. In addition, for a more synthetic overview of the methods, we have included a box plot in Figure \ref{SyntheticFuncBox} with the Log-Regrets at the end of the optimisation routine, to compare what the final results are.  These results indicate that the proposed methodology is at least competitive with state of the art batch-BO algorithms, but in most cases also demonstrating considerable improvement both in terms of log-regret and uncertainty.

\begin{figure}
     \centering
     \begin{subfigure}[b]{0.4\textwidth}
         \centering
         \includegraphics[width=6cm]{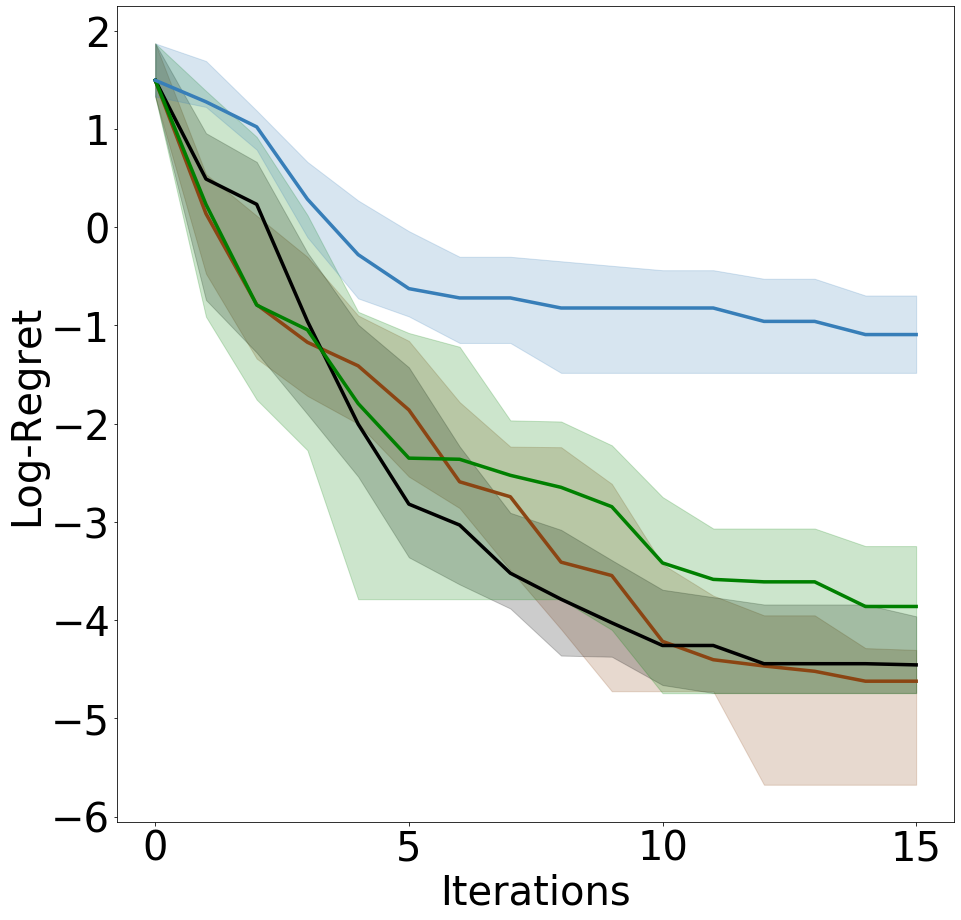}
         \caption{Ackley}
     \end{subfigure}
     \begin{subfigure}[b]{0.4\textwidth}
         \centering
         \includegraphics[width=6cm]{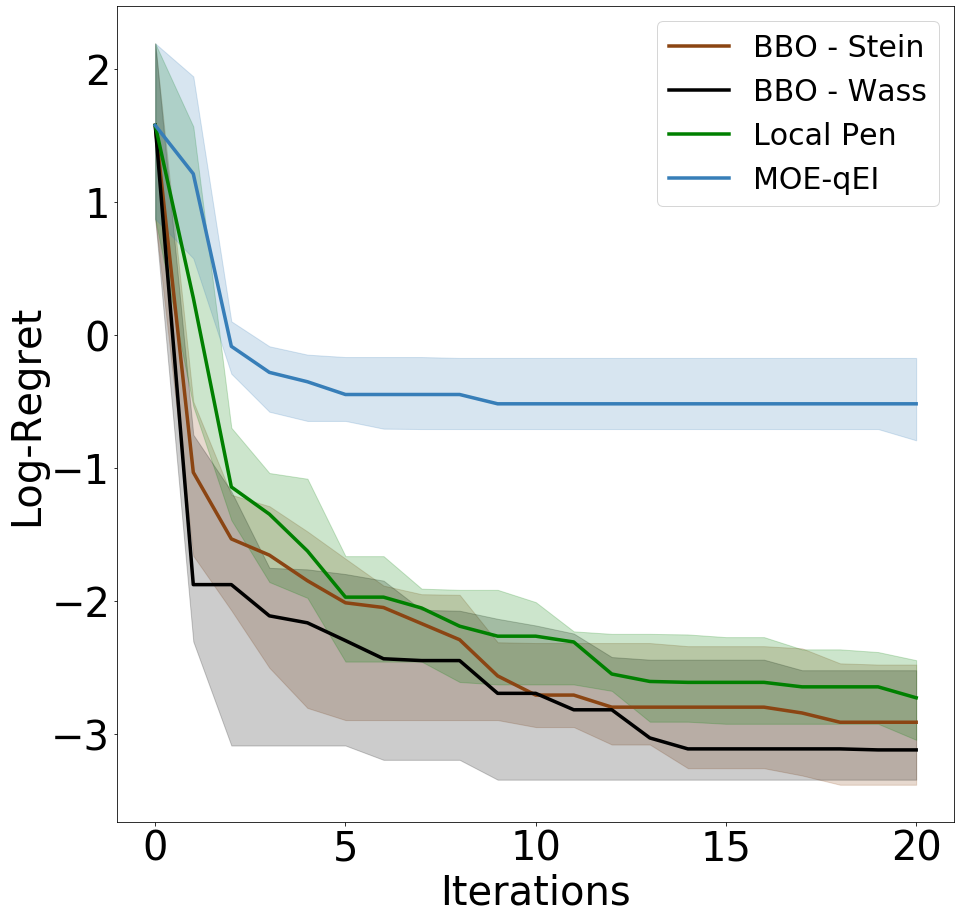}
         \caption{Griewank}
     \end{subfigure}
     \hfill
     \begin{subfigure}[b]{0.4\textwidth}
         \centering
         \includegraphics[width=6cm]{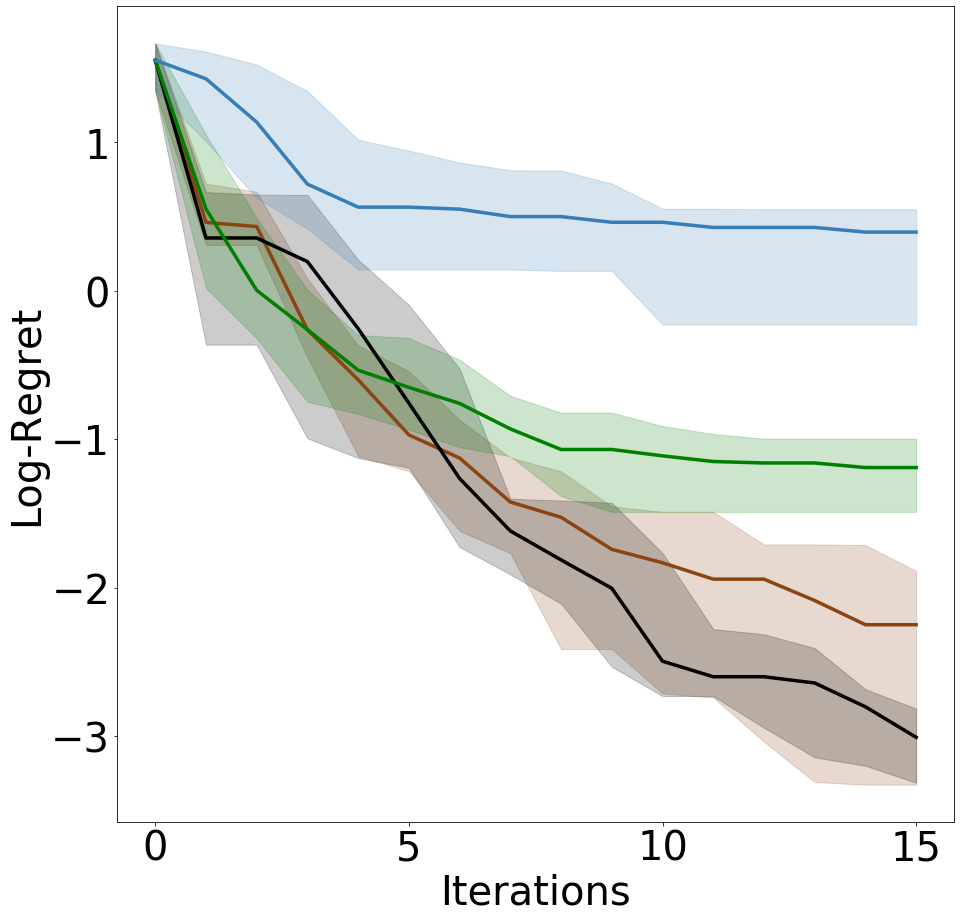}
         \caption{Ackley5}
     \end{subfigure}
     \begin{subfigure}[b]{0.4\textwidth}
         \centering
         \includegraphics[width=6cm]{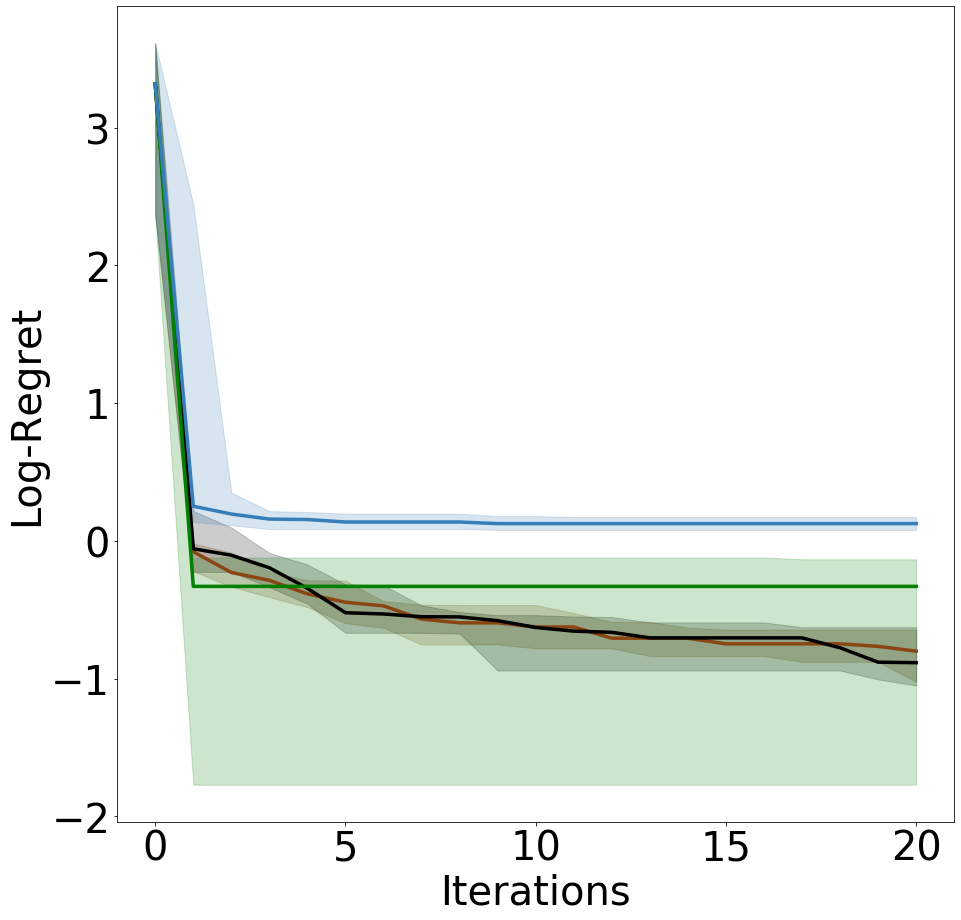}
         \caption{Griewank5}
     \end{subfigure}
        \caption{Comparison of Log-regrets for synthetic functions over the iterations of the BO routines.}\label{SyntheticFunc}
\end{figure}

\begin{figure}
     \centering
     \begin{subfigure}[b]{0.4\textwidth}
         \centering
         \includegraphics[width=6cm]{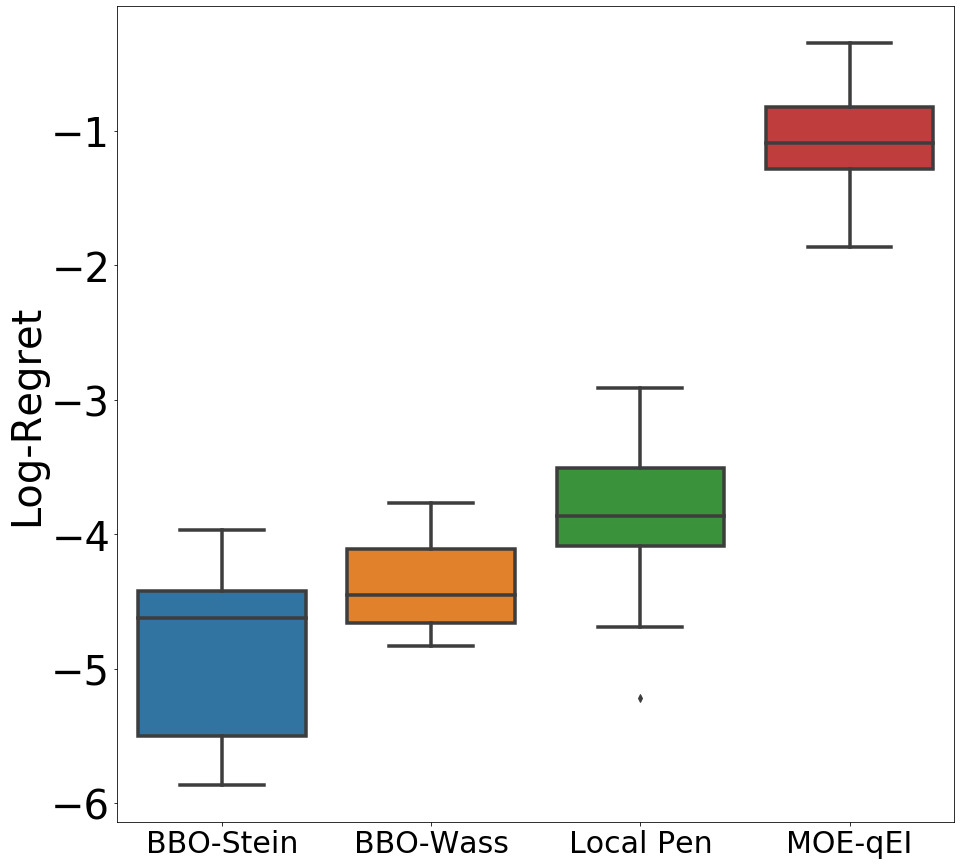}
         \caption{Ackley}
     \end{subfigure}
     \begin{subfigure}[b]{0.4\textwidth}
         \centering
         \includegraphics[width=6cm]{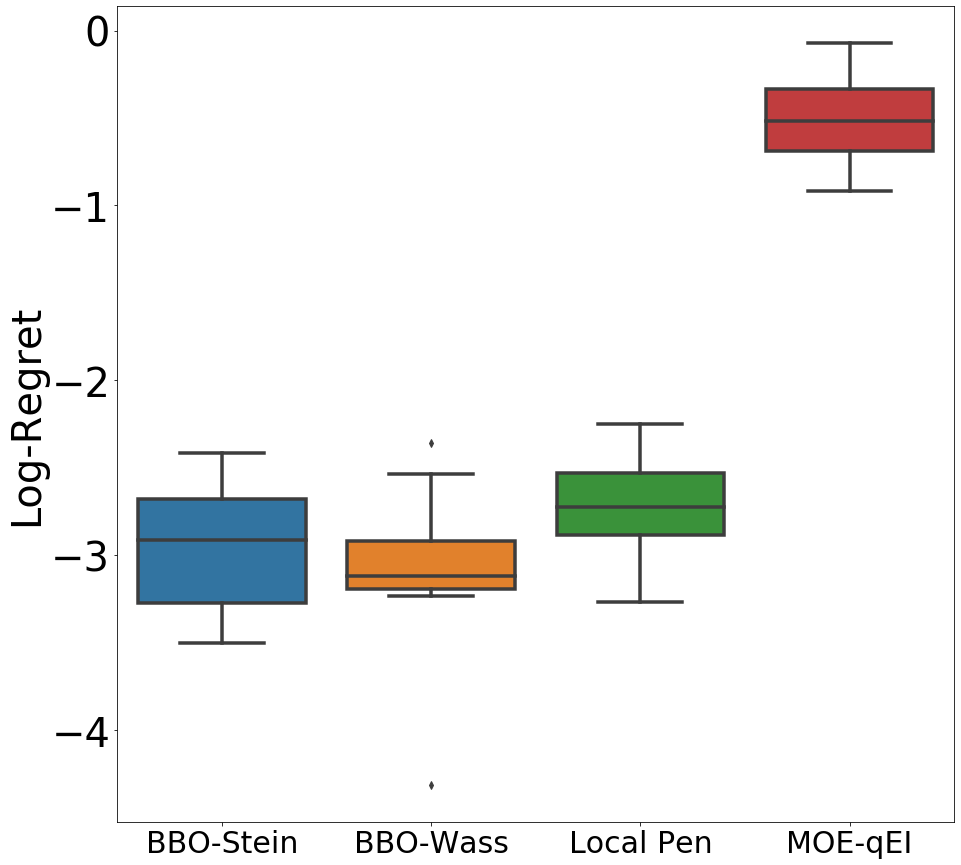}
         \caption{Griewank}
     \end{subfigure}
     \hfill
     \begin{subfigure}[b]{0.4\textwidth}
         \centering
         \includegraphics[width=6cm]{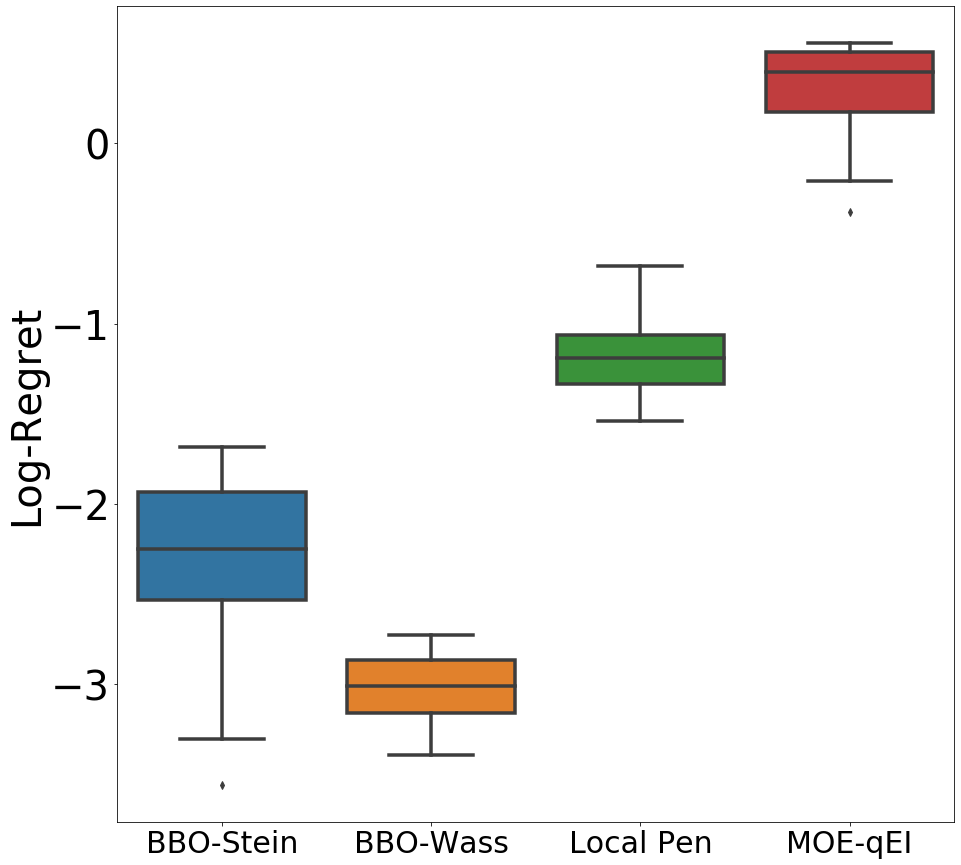}
         \caption{Ackley5}
     \end{subfigure}
     \begin{subfigure}[b]{0.4\textwidth}
         \centering
         \includegraphics[width=6cm]{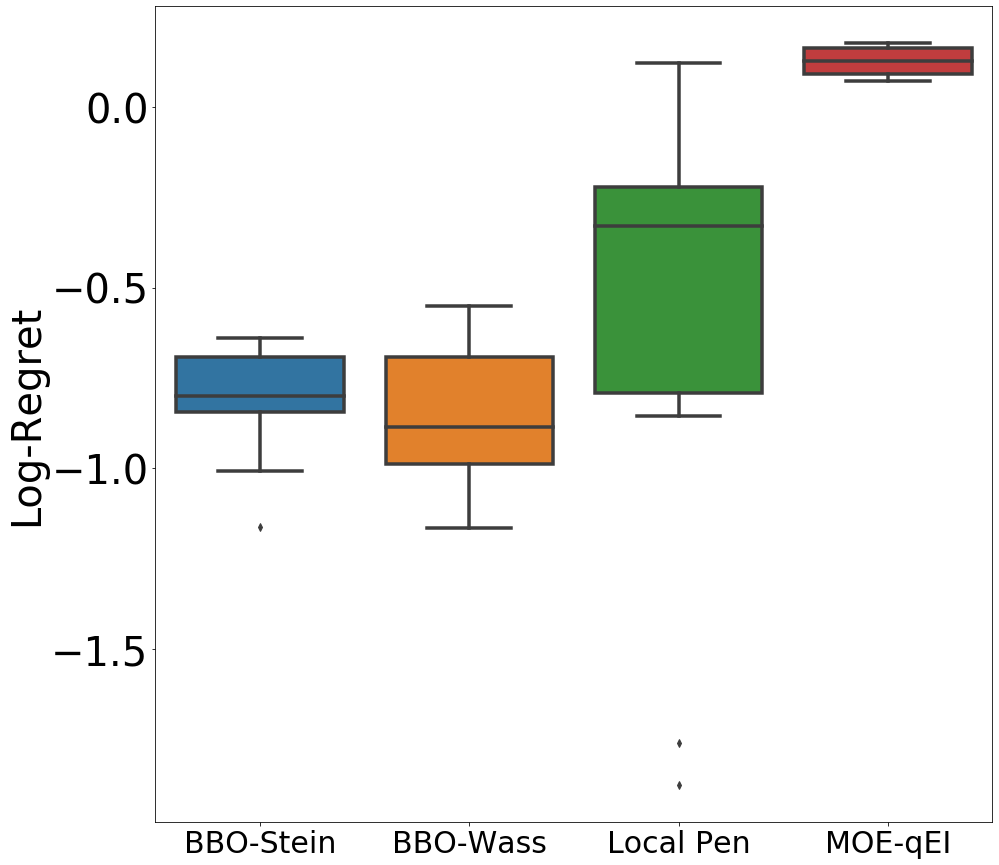}
         \caption{Griewank5}
     \end{subfigure}
    \caption{Box Plot with Log-regrets for synthetic functions at the end of the BO routines.}\label{SyntheticFuncBox}
\end{figure}

\subsection{Calibration of the Lorentz' 63 Oscillator from Time-Averaged Data}
As a further example we will use the proposed batch BO method to calibrate a nonlinear system of ordinary differential equations to data.  More specifically, we consider the problem of calibrating the parameters of the Lorentz' 63 model based on time-averaged measurements of the trajectory, similar to  \cite{duncan2021ensemble}.  The 3-dimensional Lorenz equations can be written as: 
\begin{equation} 
\label{eq:lorenz}\begin{cases} 
      \dot{x}_1 = 10(x_2-x_1), & \nonumber\\
      \dot{x}_2 = rx_1 -x_2 - x_1x_3, & \\
      \dot{x}_3 = x_1x_2 -bx_3, \nonumber
   \end{cases}
\end{equation}
with parameters $\theta = (r,b) \in \mathbb{R}^2_+$. In this example, we aim to infer the value of $\theta$ from time-averages of a function of the solution $(x_1, x_2, x_3)$ over a time-interval of duration $L$. More specifically, we assume that the data $y$ satisfies
\begin{equation}
{y} = G(\hat{\theta}) + \xi_{obs}, \quad \xi_{obs} \sim \mathcal{N}(0,\Delta_{obs}),
\end{equation}
where
\begin{align}\label{G}
    G(\theta) & = \frac{1}{L} \int_{0}^L \phi(\mathbf{u}(s,\theta)) ds ,
\end{align}
for a function $\phi$ defined on the state space.  We note that $G(\theta)$ depends on initial condition $u_0$, which we view as a random variable distributed according to the invariant measure
of the dynamics \eqref{eq:lorenz}.   Rather than follow the approach of introducing a latent variable for $u_0$ to capture its effect on the dynamics, we shall introduce an approximation to the likelihood, building on the synthetic likelihood approach of \cite{wood2010statistical}.  Assuming that the central limit holds for these dynamics we are justified in the approximation
\begin{align*}\label{G rewrite}
    G(\theta) &= G_0(\theta) + \mathcal{N}(0, L^{-1}\Delta(\theta)).
\end{align*}
Approximating $L^{-1}\Delta(\theta)$ by a constant covariance $\Delta_{model}$, estimated  from a single long run of the (assumed ergodic and mixing) model at a fixed parameter $\theta^\dagger$ and batched into windows of length $L$ and assuming that the observation noise is independent of the initial condition $u_0$ then we can rewrite the inverse problem as
\begin{equation}
\label{eq:recast_ivp}
 y = G_0(\theta) + \xi,
\end{equation}
where $\xi \sim \mathcal{N}(0, \Delta_{obs} + \Delta_{model})$, where $G_0(\theta)$ is the infinite time-average, i.e. taking the limit $L\rightarrow \infty$ in \eqref{G}.  Our
goal is to solve the inverse problem \eqref{eq:recast_ivp}, assuming we only have access to noisy, biased evaluations of $G_0(\theta)$ through $G(\theta)$.   

To generate a sample of the data, we evaluate \eqref{G} at a random initial condition drawn from an approximation o the invariant measure and then
add a sample of $\xi_{obs}$.  In this example, we shall fix  $L=10$ and choose the true (unobserved) parameters to be ${\theta}^\dagger = (28, 8/3)$. We choose $\phi: \mathbb{R}^3 \to \mathbb{R}^9$ given by:
\begin{equation}
\phi(x_1,x_2,x_3) = (x_1, x_2, x_3, x_1^2, x_2^2, x^2_3, x_1x_2, x_2x_3, x_1x_3),\end{equation}
so that the forward map $G$ captures the first and second time-averaged moments of the solution.

The parameters ${\theta}$ will be inferred based only on the  $G(\cdot)$. In our example we will let $\Delta_{obs} = 0$, meaning that randomness comes only from the unknown initial condition.
The data-misfit function is then given by:
\begin{equation}\label{likelihood}
    M(\theta) := \left[\frac{1}{2} \langle (\hat{y} - G(\theta) , \Delta_{model}^{-1} (\hat{y} - G(\theta)  \rangle\right]^t.
\end{equation}
In order to ease the computation to find the minimum, we have introduced a tempering constant $t$, which in this example is set to 0.3. 

The unique challenge in this problem is that the misfit function poses several difficulties to standard optimisation methods, due to the  roughness arising from the noisy evaluations. This feature is clearly shown in Figure \ref{Image Likelihood}, where the misfit function $M$ is evaluated with $r$ fixed to the true value and $b$ is allowed to vary. In addition, as each evaluation of the objective function requires multiple ODE solves to compute \eqref{G}, this is considerably expensive, particularly when the time horizon $L$ is large. To calibrate the model we perform batch Bayesian optimisation where, for each iteration, the misfit function is approximated using the following steps:

\begin{enumerate}
    \item Select new batch of parameter points $\boldsymbol{\theta} = (\theta_1, \ldots, \theta_N) \in \mathbb{R}^{N \times 2}$,
    \item Approximate the solution $\mathbf{u}(l,\theta_i)$ and run a simulation for $i = 1,\ldots,N$,
    \item Evaluate $G(\theta_i)$ as given by \eqref{G} and obtain solutions $y_i$, $i = 1,\ldots,N$ through time-averaging.
    \item For $y_1, \ldots, y_N$, evaluate the misfit function  as given by \eqref{likelihood}
\end{enumerate}

\begin{figure}[htp]
    \centering
    \includegraphics[width=9cm]{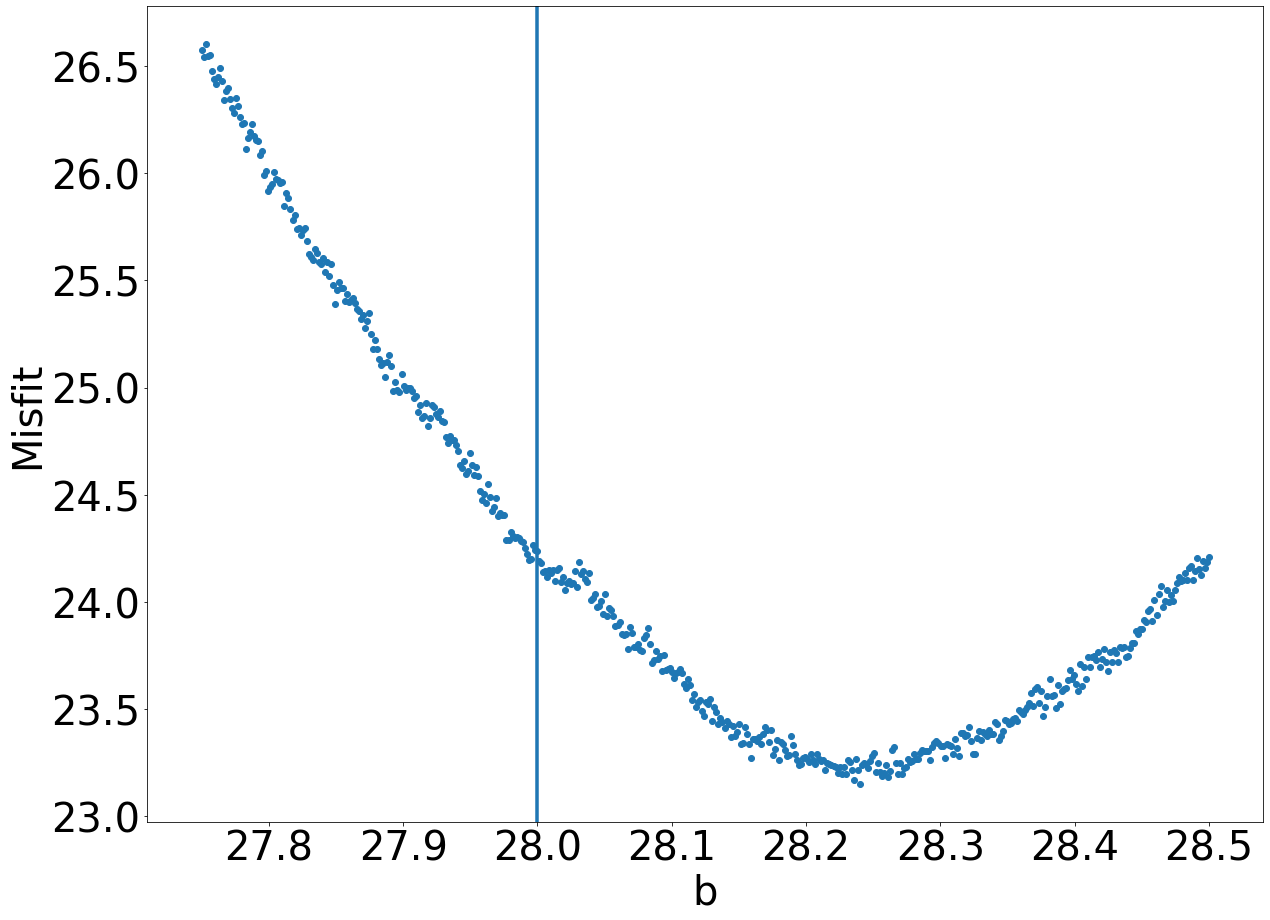}
    \caption{Data-misfit function $M$ with fixed $r= 8/3$. We note that the target function which we aim to optimise is not smooth and presents noise. Vertical line is the true value of $b$.}
    \label{Image Likelihood}
\end{figure}
We show the results of the optimisation procedure in Figure \ref{LL}. The plot shows the logarithm of the lowest misfit value observed at each iteration. Once again, we run the optimisation procedure over 10 different random initialisations, and for each we run the routines 5 times and plot the median, $10\%$ and $90\%$ quantiles of the results. More details can be found in Table \ref{Table Params}. Despite the noise in the data-misfit function, the methodologies presented in our work seem to capture the area where the true parameter lie. Lastly, our algorithms perform equivalently as Local Penalisation while outperforming MOE-qEI.

\begin{figure}[htp]
    \centering
    \includegraphics[width=9cm]{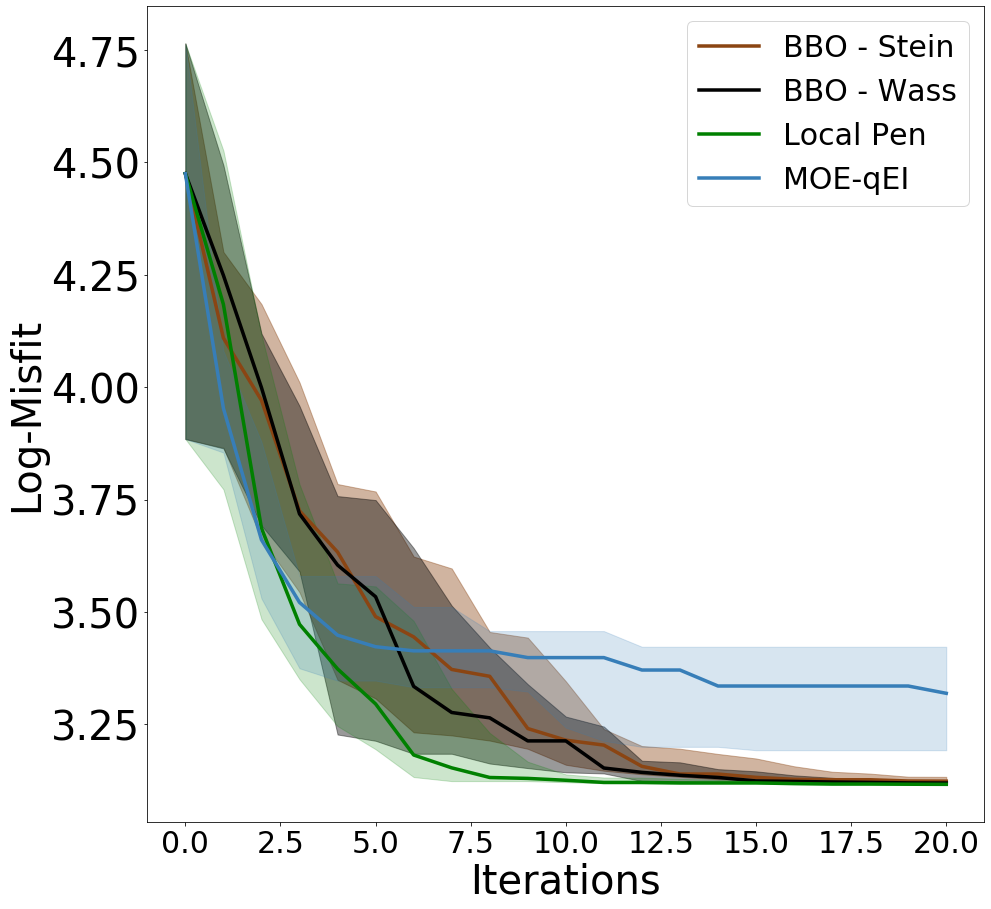}
    \caption{Comparison of Log-Misfit for Lorentz Oscillator over the iterations of the BO routines.}
    \label{LL}
\end{figure}

\section{Conclusion and Future Work}
In this work we proposed a reformulation of Batch Bayesian Optimisation in terms of an optimisation of an acquisition functional over the set of probability measures.  The acquisition functional  generalises the multipoint expected improvement acquisition function, similarly aiming to balance between exploitation and exploration, but also introduces a regularisation term which renders the optimisation problem strictly concave.    We demonstrate how different gradient flows of this objective will yield practical schemes for solving the inner-loop optimisation problem which is typically the main challenge in existing batch BO schemes, and consider two specific cases, namely one based on a Stein geometry and another based on Wasserstein.   The strength of the resulting scheme is that it provides rapid batch exploration of the maxima of the objective function while promoting diversity between the points.  Unlike other batch BO methods which introduce hard constraints on the distance between points, in the proposed schemes diversity is promoted naturally through a regularisation term which automatically adapts to the geometry of the partially observed objective function.     

There are various avenues of further investigation within this framework.  Firstly, understanding the objective landscape introduced by the concave relaxation is crucial, particularly in terms of characterising the extrema of this new acquisition function,  and how it influences the exploration-exploitation strategy.  The resulting schemes are clearly sensitive to hyper-parameter value, including step-sizes, kernel-bandwidths and the regularisation strength parameter.  Exploring means of adapting these values dynamically, for example following a strategy similar to that which was proposed in \cite{duncan2021ensemble} would be a natural candidate for future work.   Finally, while the schemes perform demonstrably well on low-to-middle dimensional problems, as with all Gaussian process based schemes, they do not scale well with dimension.  Investigating kernels which are able to mitigate the curse of dimensionality would enable the methodology to scale better.  This could include approaches based on Random Fourier Features~\cite{rahimi2007random} or alternatively constructing kernels based on low-rank projections similar to  \cite{liu2022grassmann}.

\section*{Acknowledgements}
This work has been carried out within the framework of the EUROfusion Consortium, funded by the European Union via the Euratom Research and Training Programme (Grant Agreement No 101052200 - EUROfusion). Views and opinions expressed are however those of the author(s) only and do not necessarily reflect those of the European Union or the European Commission. Neither the European Union nor the European Commission can be held responsible for them. This work was part funded by the RCUK Energy Programme [grant number EP/P012450/1]. 
AD was supported by Wave 1 of The UKRI Strategic Priorities Fund under the EPSRC Grant EP/T001569/1 and EPSRC Grant EP/W006022/1, particularly the ``Ecosystems of Digital Twins'' theme within those grants \& The Alan Turing Institute.
SC was supported by the Alan Turing Institute.  K.C.Z. was supported by the Leverhulme Trust grant 2020-310 and EPSRC grant EP/V006177/1.
\bibliographystyle{unsrt}  
\bibliography{references}

\appendix
\section{Full derivation of BBO via Stein Gradient Flow}\label{sec:AppA}
Full derivation of Equations \eqref{ObjectiveF}, \eqref{ObjectiveReg} and \eqref{Phi}. Recall that our objective is to evaluate:
\begin{equation}\label{ObjectivePhi_appendix}
    \Phi^* = \argmax_\Phi \frac{d}{d\epsilon} \left[ F[\mathbf{T}_\#(\mu)] + \alpha \mathrm{Reg}[\mathbf{T}_\#(\mu)] \right]  \Bigr \rvert_{\epsilon = 0}.
\end{equation}

Let us denote $\mathbf{x}_i^\prime = \mathbf{T} (\mathbf{x}_i)$. By using the definition of pushforward measure:
\begin{align*}
    F\left[\mathbf{T}_\#(\mu)\right] &= \mathbb{E}_{\mathbf{X}^\prime_1,\ldots,\mathbf{X}^\prime_q \sim \mathbf{T}_\#(\mu)}\left[ \mathrm{EI}(\mathbf{X}^\prime_1,\ldots,\mathbf{X}^\prime_q) \right], \nonumber \\
    & = \int \ldots \int \mathrm{EI}(\mathbf{x}^\prime_1,\ldots,\mathbf{x}^\prime_q)\mathbf{T}_\# (\mu)(d\mathbf{x}^\prime_1)\ldots\mathbf{T}_\# (\mu)(d\mathbf{x}^\prime_q), \nonumber \\
    & = \int \ldots \int \mathrm{EI}(\mathbf{x}_1 + \epsilon \Phi(\mathbf{x}_1),\ldots,\mathbf{x}_q + \epsilon \Phi(\mathbf{x}_q)) \mu(d\mathbf{x}_1)\ldots\mu(d\mathbf{x}_q). \nonumber
\end{align*}

Therefore we can take the derivative and we obtain:
\begin{align*}
    \frac{d}{d\epsilon} F\left[\mathbf{T}_\#(\mu)\right]\Bigr \rvert_{\epsilon = 0} &= \frac{d}{d\epsilon} \left[ \int \ldots \int \mathrm{EI}(\mathbf{x}_1 + \epsilon \Phi(\mathbf{x}_1),\ldots,\mathbf{x}_q + \epsilon \Phi(\mathbf{x}_q)) \mu(d\mathbf{x}_1)\ldots\mu(d\mathbf{x}_q) \right] \Bigr \rvert_{\epsilon = 0}, \nonumber \\
    & = \int \ldots \int \frac{d}{d\epsilon} \left[ \mathrm{EI}(\mathbf{x}_1 + \epsilon \Phi(\mathbf{x}_1),\ldots,\mathbf{x}_q + \epsilon \Phi(\mathbf{x}_q)) \right] \Bigr \rvert_{\epsilon = 0} \mu(d\mathbf{x}_1)\ldots\mu(d\mathbf{x}_q), \\
    & = \int \ldots \int \sum_{i=1}^q \frac{\delta \mathrm{EI}}{\delta \mathbf{x}_i }(\mathbf{x}_1,\ldots,\mathbf{x}_q)\Phi(\mathbf{x}_i) \mu(d\mathbf{x}_1)\ldots\mu(d\mathbf{x}_q).
\end{align*}

Now let $\mathbf{x}^\prime = \mathbf{T} (\mathbf{x})$ where $\mathbf{x}^\prime, \mathbf{x} \in \mathbb{R}^d$. Hence, $d\mathbf{x}^\prime = |\det(\mathbf{J T})(\mathbf{x})| (d\mathbf{x})$ where $\mathbf{J T}$ indicated the Jacobian matrix of $\mathbf{T}$. Then we can write:
\begin{align*}
    \mathrm{Reg}\left[\mathbf{T}_\#(\mu)\right] &= -\int  \log(\mathbf{T}_\#(\mu)(\mathbf{x}^\prime) ) \mathbf{T}_\#(\mu)(d\mathbf{x}^\prime),\\
    &= - \int \log \left[ \frac{\mu(\mathbf{x})}{|\det(\mathbf{J T})(\mathbf{x})|} \right] \mu(d\mathbf{x}), \\  
    &= -\int \log [{\mu(\mathbf{x})}] \mu(d\mathbf{x}) +\int \log[{|\det(\mathbf{J T})(\mathbf{x})|}] \mu(d\mathbf{x}).  
\end{align*}

Now, differentiating with respect to $\epsilon$:
\begin{align*}
    \frac{d}{d\epsilon}\mathrm{Reg}\left[\mathbf{T}_\#(\mu)\right]\Bigr \rvert_{\epsilon = 0}
    &=\frac{d}{d\epsilon} \left[ -\int \log [{\mu(\mathbf{x})}] \mu(d\mathbf{x}) +\int \log[{|\det(\mathbf{J T})(\mathbf{x})|}] \mu(d\mathbf{x})  \right]\Bigr \rvert_{\epsilon = 0}, \\ 
    &= \frac{d}{d\epsilon} \left[ \int  \log[{|\det(\mathbf{J T})(\mathbf{x})|}] \mu(d\mathbf{x})  \right]\Bigr \rvert_{\epsilon = 0}, \\
    &=  \int \frac{d}{d\epsilon} \left[ \log[{|\det(\mathbf{J T})(\mathbf{x})|}]\right]\Bigr \rvert_{\epsilon = 0} \mu(d\mathbf{x}), \\
    &= \int \mathrm{trace} \left[ (\mathrm{Id} + \epsilon\mathbf{J} \Phi(\mathbf{x}))^{-1}\Bigr \rvert_{\epsilon = 0} \frac{d}{d\epsilon}(\mathrm{Id} + \epsilon\mathbf{J}\Phi(\mathbf{x}))\Bigr \rvert_{\epsilon = 0}    \right] \mu(d\mathbf{x}), \\
    &= \int \mathrm{trace} \left[ \mathbf{J}\Phi(\mathbf{x}) \right] \mu(d\mathbf{x}), \\
    &= \int \nabla \cdot \Phi(\mathbf{x}) \mu(d\mathbf{x}).
\end{align*}
 
Equation \eqref{ObjectivePhi_appendix} can then be rewritten as:
\begin{equation*}
    \Phi^* = \argmax_\Phi \int \ldots \int \sum_{i=1}^q \frac{\delta \mathrm{EI}}{\delta \mathbf{x}_i }(\mathbf{x}_1,\ldots,\mathbf{x}_q)\Phi(\mathbf{x}_i) \mu(d\mathbf{x}_1)\ldots\mu(d\mathbf{x}_q) + \int \nabla \cdot \Phi(\mathbf{x}) \mu(d\mathbf{x})
\end{equation*}

\section{Full Derivation of BBO via Wasserstein Gradient Flow}

Full derivation of Equations \eqref{Wasserstein GF}, \eqref{FV-OBJ} and \eqref{FV-REG} and \eqref{Flow}. Recall that our objective is calculating the first variation of $L[\mu_t]$ so that we can specify the gradient flow given by: 
\begin{equation}\label{Wasserstein GF - Appendix}
    \frac{\partial \mu_t}{\partial t} = \mathrm{div}(\mu_t \nabla_\mathbf{x} L'[\mu_t]),
\end{equation}
where $L'[\mu_t]$ is the first variation of $L$. We can split the calculations focusing firstly on $F$ and then the regularisation term. This can be calculated using the Gateaux derivative and we can write, for $\epsilon > 0$ and $\nu$ such that $\mu + \epsilon \nu \in \mathcal{P}(\mathcal{D})$: 
\begin{align}\label{FV-OBJ - Appendix}
    \frac{d}{d\epsilon} F[\mu + \epsilon \nu]  \Bigr \rvert_{\epsilon = 0} &  =
    \frac{d}{d\epsilon} \left[ \int \ldots \int \text{q-EI}(\mathbf{x_1}, \ldots, \mathbf{x_q}) (\mu+ \epsilon \nu)(d\mathbf{x}_1)\ldots(\mu+ \epsilon \nu)(d\mathbf{x}_q)\right] \Bigr \rvert_{\epsilon = 0},  \nonumber \\
    &=  \frac{d}{d\epsilon} \left[ \int \ldots \int \text{q-EI}(\mathbf{x_1}, \ldots, \mathbf{x_q}) (\mu(\mathbf{x}_1)+ \epsilon \nu(\mathbf{x}_1))\ldots(\mu(\mathbf{x}_q)+ \epsilon \nu(\mathbf{x}_q))d\mathbf{x}_1\ldots d\mathbf{x}_q \right] \Bigr \rvert_{\epsilon = 0}, \nonumber \\
    & = \frac{d}{d\epsilon} \left[ \int \ldots \int \text{q-EI}(\mathbf{x_1}, \ldots, \mathbf{x_q}) \mu(\mathbf{x}_1)\ldots\mu(\mathbf{x}_q d\mathbf{x}_1\ldots d\mathbf{x}_q \right] \Bigr \rvert_{\epsilon = 0} + \nonumber \\
    & +\frac{d}{d\epsilon} \left[ \epsilon \int \ldots \int \text{q-EI}(\mathbf{x_1}, \ldots, \mathbf{x_q}) \sum_{i=1}^q \nu(\mathbf{x}_i) \prod_{j \neq i} \mu(\mathbf{x}_i) d\mathbf{x}_1\ldots d\mathbf{x_q}  \epsilon \right] \Bigr \rvert_{\epsilon = 0} + \nonumber \\
    & + \frac{d}{d\epsilon} \left[ \epsilon^2(\ldots) + \ldots + \epsilon^q(\ldots) \right] \Bigr \rvert_{\epsilon = 0}, \nonumber \\
    & = \int \ldots \int \text{q-EI}(\mathbf{x_1}, \ldots, \mathbf{x_q}) \sum_{i=1}^q \nu(\mathbf{x}_i) \prod_{j \neq i} \mu(\mathbf{x}_i) d\mathbf{x}_1\ldots d\mathbf{x_q}.
\end{align}

Since q-EI is invariant under permutations, we can simplify the above integral to obtain: 
\begin{align}\label{FV-OBJ-FINAL - Appendix}
    \frac{d}{d\epsilon} F[\mu + \epsilon \nu]  \Bigr \rvert_{\epsilon = 0}  & = q \int \ldots \int \text{q-EI}(\mathbf{x_1}, \ldots, \mathbf{x_q})\mu(d\mathbf{x}_1)\ldots\mu(\mathbf{x}_{q-1})\nu(d\mathbf{x}_q), \nonumber \\
    & =  \int \left( q\int \ldots \int \text{q-EI}(\mathbf{x_1}, \ldots, \mathbf{x_q})\mu(d\mathbf{x}_1)\ldots\mu(\mathbf{x}_{q-1}) \right) \nu(d\mathbf{x}_q).
\end{align}

Hence we have found that the first variation $F'[\mu_t] = \left( q\int \ldots \int \text{q-EI}(\mathbf{x_1}, \ldots, \mathbf{x_q})\mu(d\mathbf{x}_1)\ldots\mu(\mathbf{x}_{q-1}) \right)$. 

We need to take the gradient of this quantity. Hence we can write:
\begin{align*}
    \nabla_\mathbf{x} F'[\mu_t] = \left[ \partial_{\mathbf{x}_i} \frac{\partial F}{\partial \mu} \right]_{i=1}^q = \begin{cases} 
        0 & i\neq q,\\
        q \frac{\partial}{\partial{\mathbf{x}_q}} \int \ldots \int  \left(\text{q-EI}(\mathbf{x_1}, \ldots, \mathbf{x_q}) \right) \mu(d\mathbf{x}_1)\ldots\mu(\mathbf{x}_{q-1})  & i = q .
   \end{cases} 
\end{align*}

Finally, we are able to calculate the right hand term in equation \eqref{Wasserstein GF - Appendix} for the target functional $F$. We can write: 
\begin{align}\label{Flow F - Appendix}
    \mathrm{div}(\mu_t \nabla_{\mathbf{x}}F'[\mu_t])&  = q \frac{\partial \mu_t }{\partial {\mathbf{x}_q}} \frac{\partial}{\partial{\mathbf{x}_q}} \int \ldots \int  \left(\text{q-EI}(\mathbf{x_1}, \ldots, \mathbf{x_q}) \right) \mu(d\mathbf{x}_1)\ldots\mu(\mathbf{x}_{q-1})  + \nonumber \\
    & + q \mu_t  \frac{\partial^2}{\partial {\mathbf{x}^2_q}} \int \ldots \int \left(\text{q-EI}(\mathbf{x_1}, \ldots, \mathbf{x_q}) \right) \mu(d\mathbf{x}_1)\ldots\mu(\mathbf{x}_{q-1}).
\end{align}

We can now focus on the regularisation term.
\begin{align}\label{FV-REG - Appendix}
    \frac{d}{d\epsilon} \text{Reg}[\mu + \epsilon \nu]  \Bigr \rvert_{\epsilon = 0} & =\frac{d}{d\epsilon}  \left[ \int \log(\mu(\mathbf{x}) + \epsilon \nu(\mathbf{x})) (\mu(\mathbf{x}) + \epsilon \nu(\mathbf{x})) d\mathbf{x} \right]\Bigr \rvert_{\epsilon = 0}, \\
    &= \int \nu(\log (\mu(\mathbf{x})) + 1 )d\mathbf{x}.\nonumber
\end{align}

We derive that the first variation for the regularisation term is Reg'$[\mu] = \log (\mu) + 1 $. Hence, we can write:
\begin{align}
    \mathrm{div}(\mu_t \nabla \mathrm{Reg}'[\mu]) &= \mathrm{div}(\mu_t \nabla[ \log (\mu_t) + 1]), \nonumber \\
    & =  \mathrm{div}(\mu_t \frac{1}{\mu_t} \nabla \mu_t),\nonumber  \\
   & = \Delta \mu_t.\label{RegWass - Appendix}
\end{align}

Hence, we can combine \eqref{Flow F - Appendix} and \eqref{RegWass - Appendix} to obtain the flow.
\begin{equation}
    \frac{\partial \mu_t}{\partial t} = \partial_{\mathbf{x}}(\mu_t) H[\mu_t, \mathbf{x}] + \mu_t \partial_{\mathbf{x}}H[\mu_t, \mathbf{x}] + \Delta \mu_t, 
\end{equation}
where: 
\begin{equation}H[\mu, \mathbf{x}] = q \frac{\partial}{\partial{\mathbf{x}_q}} \int \ldots \int  \left(\text{q-EI}(\mathbf{x_1}, \ldots, \mathbf{x}) \right) \mu(d\mathbf{x}_1)\ldots\mu(\mathbf{x}_{q-1}).\end{equation}

\end{document}